\def\eqref#1{equation~\ref{#1}}
\def\1{\bm{1}}
\DeclareMathAlphabet{\mathsfit}{\encodingdefault}{\sfdefault}{m}{sl}
\SetMathAlphabet{\mathsfit}{bold}{\encodingdefault}{\sfdefault}{bx}{n}
\newcommand{\R}{\mathbb{R}}
\newcommand{\red}[1]{\textcolor[rgb]{1.00,0.00,0.00}{#1}}
\newcommand{\blue}[1]{\textcolor{black}{#1}}
\newcommand{\ie}{\emph{i.e.}}
\newcommand{\eg}{\emph{e.g.}}
\newcommand{\Rf}{\mathfrak{R}}
\newcommand{\Fc}{\mathcal{F}}
\newcommand{\Tc}{\mathcal{T}}
\newcommand{\Lc}{\mathcal{L}}
\newcommand{\Xc}{\mathcal{X}}
\newcommand{\Wc}{\mathcal{W}}
\newcommand{\Eb}{\mathbb{E}}
\newcommand{\Rb}{\mathbb{R}}
\newcommand{\Sb}{\mathbb{S}}
\newcommand{\Gc}{\mathcal{G}}
\newcommand{\supp}{\textrm{supp}}
\newcommand{\dom}{\textrm{dom }}
\newcommand{\newl}{\nonumber \\}
\newcommand{\ppos}{p_+}
\newcommand{\pmargin}{p_d}
\newcommand{\squares}{p_\times}
\newtheorem{thm}{Theorem}
\newtheorem{definition}[thm]{Definition}
\newtheorem{assump}[thm]{Assumption}
\setlist{nolistsep}
\newcommand{\vio}[1]{\textcolor{black}{#1}}
\newcommand{\teal}[1]{\textcolor{black}{#1}}
\newtheorem{lem}[thm]{Lemma}
\title{$f$-MICL: Understanding and Generalizing InfoNCE-based Contrastive Learning}
\author{\name Yiwei Lu \thanks{Equal contribution} \email yiwei.lu@uwaterloo.ca \\
      \addr School of Computer Science\\
      University of Waterloo \\
      Vector Institute
      \AND
      \name Guojun Zhang \footnotemark[1] \email guojun.zhang@huawei.com \\
      \addr Huawei Noah's Ark Lab
      \AND
      \name Sun Sun \email sun.sun@nrc-cnrc.gc.ca \\
      \addr School of Computer Science\\
      University of Waterloo \\
      National Research Council Canada 
      \AND
      \name Hongyu Guo \email hongyu.guo@uottawa.ca\\
      \addr National Research Council Canada  \\
      University of Ottawa 
      \AND
      \name Yaoliang Yu \email yaoliang.yu@uwaterloo.ca\\
      \addr School of Computer Science \\
      University of Waterloo \\
      Vector Institute
}
\begin{document}

\maketitle

\begin{abstract}
In self-supervised contrastive learning, a widely-adopted  objective function is InfoNCE, which uses the heuristic cosine similarity for the representation comparison, and is closely related to maximizing the Kullback--Leibler (KL)-based mutual information. In this paper, we aim at answering two intriguing questions: (1) Can we go beyond the KL-based objective? (2) Besides the popular cosine similarity, can we design a better similarity function? We provide answers to both questions by generalizing  the KL-based mutual information to the $f$-\textbf{M}utual \textbf{I}nformation in  \textbf{C}ontrastive \textbf{L}earning ($f$-MICL) using the $f$-divergences. To answer the first question,
    we provide a wide range of $f$-MICL objectives which share the nice properties of InfoNCE (e.g., alignment and uniformity), and meanwhile result in similar or even superior performance. For the second question, assuming that the joint feature distribution is proportional to the Gaussian kernel, we derive an \textbf{$f$-Gaussian similarity} with better interpretability and empirical performance. Finally, we identify close relationships between the  $f$-MICL objective and several popular InfoNCE-based objectives. 
    Using benchmark tasks from both vision and natural language, we empirically 
    evaluate $f$-MICL with different $f$-divergences on various architectures (SimCLR, MoCo, and MoCo v3) and datasets. We observe that $f$-MICL generally outperforms the benchmarks and the best-performing $f$-divergence is task and dataset dependent.
\end{abstract}

\section{Introduction}
Recent advances in self-supervised learning aim at learning similar representations from different augmented views of the same data sample. However, naively implementing this idea would easily make representations converge to some trivial constant  (i.e., feature collapse) in practice. To address this problem, researchers propose new algorithms either from the \emph{model architecture} perspective or the \emph{training objective} perspective.
The former method (e.g., \cite{byol,siamsiam,zhang2021zero}) applies techniques such as  stop gradient or predictor module to create asymmetry in networks, while the latter method encourages the contrastiveness between similar (positive) and dissimilar (negative) sample pairs through the objective design.

In this paper, we intend to deepen our understanding of contrastive learning by generalizing the current objective design.
To achieve self-supervised contrastive learning,
existing objectives are proposed from different perspectives such as
the mutual information  (e.g., InfoNCE \citep{wu2018unsupervised,oord2018representation,ChenKNH20,henaff2019data,he2020momentum} ), the information redundancy  (e.g., Barlow Twins \citep{barlow}), and the regularization (e.g., VICReg \citep{vicreg}). 
In particular, the InfoNCE objective is widely used,
which aims to maximize the probability of picking a similar sample pair 
among a batch of sample pairs, and can be interpreted as a lower bound of the mutual information (MI) between two views of samples \citep{oord2018representation,bachman2019learning,tian2020contrastive,TschannenDRGL20}. This is consistent with the well-known ``InfoMax principle" \citep{linsker1988self}. 
To measure the similarity between sample pairs, cosine similarity is usually adopted.

To attain the aforementioned goals of better understanding and generalizing contrastive learning, we here focus on the widely-adopted InfoNCE objective, and aim at two questions regarding it: 

\textit{(1) MI is essentially the Kullback--Leibler (KL) divergence between the joint distribution and the product of the marginal distributions. Is this KL-based objective optimal? If not, can we go beyond the KL-based objective?}

\textit{(2) Besides the commonly used cosine similarity for measuring the distance between samples, can we provide a better similarity function with a theoretical basis?
}

To answer the above two questions, we  generalize the KL-based mutual information to the broader $f$-divergence family \citep{ali1966general,csiszar1967information}, and propose the benchmark of $f$-mutual information in contrastive learning ($f$-MICL).
By searching through a wide range of $f$-divergences, we observe that the KL divergence is not always the best, and several other $f$-divergences in fact show similar or even superior performance in practice. 

For the second question, while it is challenging to provide an answer based on the InfoNCE objective, it is possible to derive a proper similarity function under the $f$-MICL framework.
By assuming that the joint feature distribution is proportional to the popularly-adopted Gaussian kernel, we propose a novel \emph{$f$-Gaussian similarity} function that enjoys better empirical performance.

Finally, we show the generalization  of $f$-MICL by drawing connections between the $f$-MICL objective and several popular InfoNCE-based objectives (e.g., SimCLR\citep{ChenKNH20}, MoCo\citep{he2020momentum}, and Alignment and Uniformity (AU) \citep{wang2020understanding}). We identify that those objectives are  closely related  to $f$-MICL: Alignment and Uniformity (AU) \citep{wang2020understanding} can be treated as a special case, and SimCLR \citep{ChenKNH20} and MoCo \citep{he2020momentum} are upper bounds for a transformed $f$-MICL.  These results provide a different angle to better understand InfoNCE.
Moreover, we show  both theoretically and empirically that nice properties of InfoNCE  (e.g., alignment and uniformity \citep{wang2020understanding}) can be naturally extended to $f$-MICL.



We summarize our main contributions as follows:


\begin{itemize}[topsep=0pt,parsep=0pt,itemsep=0pt]
\item  

Motivated by InfoNCE, we propose a general framework for contrastive learning by extending the MI to the general $f$-MI, which provides a wide range of objective choices.
\item 
Instead of using heuristic similarity functions, we provide a novel similarity function, called \emph{$f$-Gaussian similarity}, based on the convex conjugate and an assumption on the joint feature distribution.
\item 
We identify close relationships between our $f$-MICL objective and several InfoNCE-based contrastive learning objectives.
\item Empirically,
we show that $f$-MICL achieves notable improvement over
benchmarks on various datasets, and the best-performing $f$-divergence depends on the specific task and dataset. In addition, our proposed $f$-Gaussian similarity  consistently outperforms the cosine similarity.%
\end{itemize}


\section{$f$-Mutual Information}\label{sec:prem}



To provide answers to the above two questions regarding the InfoNCE objective,
we first extend the KL-based mutual information to the more general $f$-mutual information.
The definition of the $f$-mutual information ($f$-MI)
is as follows:



\begin{definition}[\textbf{$f$-mutual information, \citealt{csiszar1967information}}] \label{def:fMI} 
Consider a pair of random variables $(X, Y)$ with joint density function $p(x,y)$ and marginal densities $p(x)$ and $p(y)$. 
The $f$-mutual information $I_f$ between $X$ and $Y$ is defined as
\begin{align}
\begin{split}
&I_f(X;Y) := 
\int f\left(\frac{p(x, y)}{p(x)p(y)} \right) p(x)p(y) \cdot \mathrm{d}\lambda(x,y),
\end{split}
\end{align}
where $f : \Rb_+ \to \Rb$ is (closed) convex
with $f(1) = 0$, and $\lambda$ is a dominating measure (e.g., Lebesgue).
\end{definition}



Note that the $f$-MI is essentially the $f$-divergence between the joint distribution and the product of marginal distributions. It is well-known that the $f$-MI is non-negative and symmetric. Moreover, provided that $f$ is strictly convex, $I_f(X; Y) = 0$ iff $X$ and $Y$ are independent \citep{ali1966general}. 


Since it is challenging to provide an accurate estimation of the $f$-divergences in high dimensions, \cite{nguyen2010estimating} used the convex conjugate as a lower bound for the $f$-divergences. With this result we can lower bound $I_f(X;Y)$ as follows:
\begin{align}\label{eq:dual_form_fMI}
\teal{I_f(X;Y) \geq} \sup_{s\in \Fc} \Big(\underset{(X, Y) \sim p(x,y)}{\Eb} s(X, Y) - \underset{(X, Y)\sim p(x)p(y)}{\Eb} f^*\circ s(X, Y) \Big),
\end{align}
where $p(x,y)$ denotes the joint density, $p(x)p(y)$ stands for the product of marginal densities, and the symbol $\circ$ denotes function composition.
The function $f^*(t) := \sup_{x\in \Rb_+}(x t - f(x))$ is known as the convex conjugate\footnote{More precisely, this is the monotone convex conjugate since we restrict the domain of $f$ to $\Rb_+$.} of $f$ and is \emph{monotonically increasing}, and $s(\cdot)$ belongs to $\Fc$, a class of functions on $(x, y)$ that we can parameterize.
Using results in \cite{nguyen2010estimating}, one can show that  \cref{eq:dual_form_fMI} is equal to $I_f(X;Y)$ if there exists $s_\star \in \Fc$ such that for any $(x, y) \in \supp(p(x)) \times \supp(p(y))$, where $\supp(\cdot)$  denotes the support of a distribution, we have:
\begin{align}\label{eq:condition_T}
s_\star(x,y) = f'\left(\frac{p(x, y)}{p(x)p(y)}\right).
\end{align}
\teal{In other words, plugging the optimal $s_\star(x,y)$ into \cref{eq:dual_form_fMI} we obtain equality.}
In \Cref{tbl:choices_f_div} we list common choices of $f$-divergences, their conjugates, and the derivatives. We also include the composition $f^* \circ f'$ for later purposes (see Theorem \ref{thm:uniformity} below).



\begin{table*}
\setlength\tabcolsep{18pt}
\caption{Common choices of $f$-divergences. KL: Kullback--Leibler; JS: Jensen--Shannon; SH: Squared Hellinger; VLC: Vincze--Le Cam \protect\citep{le2012asymptotic}. For JS, we define $\varphi(u) = - (u+1) \log \protect\frac{1+u}{2}+u \log u$. The Tsallis-$\alpha$ divergence is defined in \protect\cite{tsallis1988possible}. See \protect\Cref{app:f-div} for more details.}
\label{tbl:choices_f_div}
\begin{center}
	\begin{tabular}{llllll}
	\toprule
	\centering
	{\bf Divergence} & $f(u)$  & $f^*(t)$ & $f'(u)$ & $f^*\circ f'(u)$ \\
	\midrule \midrule
	KL
	& $u \log u$
	& $\exp(t-1)$
	& $\log u + 1$
	& $u$
	\\
	JS
	& $\varphi(u)$
	& $- \log(2-e^t)$
	& $\log 2 + \log \frac{u}{1+u}$
	& $-\log \frac{2}{1+u}$
	
	\\
	Pearson $\chi^2$ 
	& $(u-1)^2$ 
	& $t^2/4 + t$
	& $2(u-1)$
	& $u^2 - 1$
	\\
	SH 
	& $(\sqrt{u} - 1)^2$
	& $\frac{t}{1 - t}$
	& $1 - u^{-1/2}$
	& $u^{1/2} - 1$ 
	\\
	Tsallis-$\alpha$
	& $\tfrac{u^\alpha}{\alpha - 1}$
	& $(\tfrac{\alpha - 1}{\alpha}t)^{\tfrac{\alpha}{\alpha-1}}$
	& $\frac{\alpha}{\alpha - 1} u^{\alpha - 1}$
	& $u^\alpha$
	\\
	VLC
	& $\frac{(u-1)^2}{u + 1}$ 
	& $4 - t - 4\sqrt{1-t}$
	& $1 - \frac{4}{(u+1)^2}$
	& $3 - \frac{4}{u+1}$
	\\
	\bottomrule
	\end{tabular}
\end{center}
\end{table*}
\section{$f$-MICL}\label{sec:design} 

With the introduction of the more general $f$-MI we now proceed to the design of the objective and similarity function. Then we will analyze the property of the proposed framework and compare it with some existing InfoNCE-based benchmarks. 

\subsection{$f$-MICL objective}
Contrastive learning is a popular \emph{self-supervised} method for representation learning. In contrastive learning, we expect similar sample pairs to be close to each other in the embedding space, while dissimilar pairs to be far apart. Based on the $f$-MI introduced in \S~\ref{sec:prem}, we propose a general framework for contrastive learning, coined as $f$-MICL.  


We denote $g: \Xc\to \Sb^{d-1}$ as the feature encoder (usually constructed by a neural network) from the input space $\Xc$ to the hypersphere, and we use the shorthands $x^g := g(x)$ and $y^g := g(y)$ to represent the feature embeddings. The notation $p_{d}$ stands for the data distribution, and $\squares := p_d\otimes p_d$ means its self product (\teal{product of marginals, e.g., pairs of images}). We denote $p_+$ as the distribution of \emph{positive pairs}, \ie, two samples with similar feature embeddings (\teal{joint distribution, e.g., the same image with different data augmentation}). 
Using the lower bound of the $f$-MI in \cref{eq:dual_form_fMI}, we have the  general $f$-MICL objective as follows:
\begin{align}\label{eq:objective}
\vio{\sup_{s\in \Fc}}~~
\mathbb{E}_{(x, y) \sim \ppos} s(x^g,y^g) - \mathbb{E}_{(x, y)\sim \squares} f^*\circ s(x^g,y^g),  
\end{align}
where $s(x^g,y^g)$ can be understood as the similarity measurement between two feature embeddings in the context of contrastive learning.
Essentially, we are studying the variational lower bound \cref{eq:dual_form_fMI} in the \emph{feature space}, with the feature embeddings learnable. 
We can treat the first term as the similarity score between \emph{positive pairs} with similar feature embeddings, and the second term as the similarity score between two random samples, a.k.a.~\emph{negative pairs}. As $f^*$ is an increasing function, maximizing the $f$-MI is equivalent to simultaneously maximizing the similarity between positive pairs and minimizing the similarity between negative pairs.

With \cref{eq:objective}
we have answered the first question: there are a spectrum of $f$-MICL objectives that can be applied in contrastive learning by using different $f$ functions. We will   discuss  how to choose  the best $f$ empirically in \S\ref{sec:exp}.


\subsection{$f$-Gaussian similarity}\label{subsec:sim}


\begin{figure}
    \centering
    \includegraphics[width=0.8\textwidth]{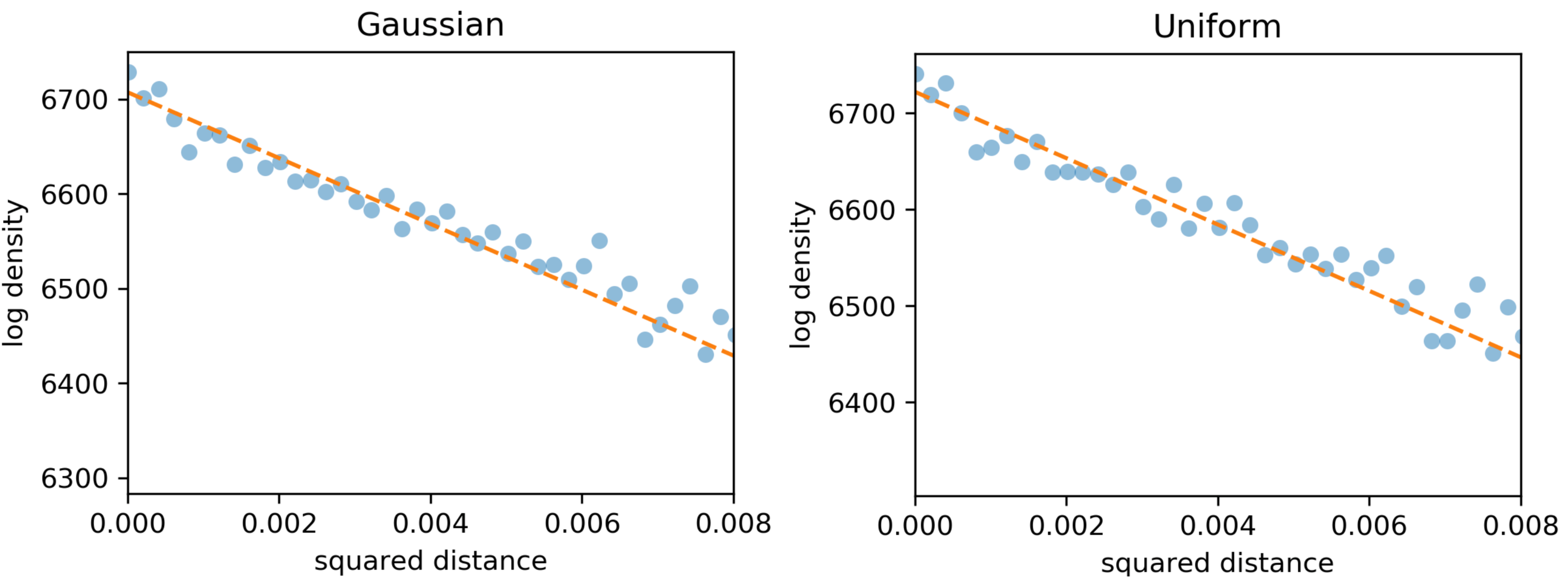}
    \caption{Experiment for verifying Assumption \ref{assmp:joint_vMF}. Here we draw the relation between the squared distances $\|x^g - y^g\|^2$ and the averaged log likelihood $\log p_g$, with $\log p_g$ estimated by the flow model RealNVP \citep{dinh2016density}. \vio{({\bf left}) Gaussian prior; {\bf (right)} Uniform prior.}
    The features are learned by SimCLR trained on CIFAR-10. See more details in Appendix \ref{sec:add_exp}.
    }
    \label{fig:gaussian}
\end{figure}

Previous works in constrastive learning usually adopt some heuristic similarity function such as the cosine similarity function $s(x^g,y^g) = x^g \cdot y^g$. Although it shows promising performance in practice, our second question is that, can we provide a better similarity function than the popular cosine similarity?

Note that 
\cref{eq:condition_T} is a natural choice of $s(\cdot)$ from the perspective of deriving the $f$-MI.
In the context of contrastive learning, by denoting the density functions of the marginal feature distributions as $p_g(x^g)$ and $p_g(y^g)$, and the density of the joint feature distribution as $p_g(x^g, y^g)$, from \cref{eq:condition_T} we have an optimal similarity function as follows:




\begin{restatable}[{\eg, \citealt[Lemma 1]{nguyen2010estimating}}]{lem}{optimalsim}\label{lem:optimal_sim}
Suppose $f$ is differentiable, and the embedding function $g$ is fixed. 
The following similarity function $s_\star$ maximizes \cref{eq:objective}:
\begin{align}\label{eq:optimal_similarity1}
s_\star(x^g, y^g) = f'\left( \frac{p_g(x^g, y^g)}{p_g(x^g)p_g(y^g)} \right).
\end{align}
\end{restatable}

Obviously,
the optimal $s_\star$ in fact induces the $f$-MI on the feature space, which is a low bound of the original $f$-MI. 
Although \cref{eq:optimal_similarity1} provides an optimal similarity function, it nevertheless depends on the unknown density functions. 
How can we implement \cref{eq:optimal_similarity1} in practice? 
Among various known density functions, it is natural to choose a typical kernel function for structured data for validation \citep{balcan2008theory,powell87,murphy2012machine}, e.g., the Gaussian kernel.


\begin{assump}[\textbf{Gaussian kernel}]\label{assmp:joint_vMF}
The joint feature density (wrt the uniform distribution over the hypersphere $\Sb^{d-1}$) is proportional to a Gaussian kernel, namely 
\begin{align}
p_g(x^g, y^g) \propto G_\sigma(\|x^g - y^g\|^2) = \mu \, \exp\left(-\tfrac{\|x^g - y^g\|^2}{2\sigma^2}\right),\nonumber
\end{align}
where $\mu := \exp(\tfrac{1}{\sigma^2}) \tfrac{C}{c^2}$ is a constant that we determine below.
\end{assump}
Since $x^g, y^g \in \Sb^{d-1}$ have unit (Euclidean) norm, we have 
\begin{align}
\label{eq:vM-F}
    p_g(x^g, y^g) \propto \exp( \tfrac{x^g \cdot y^g}{\sigma^2} ),
\end{align}
which belongs to the von Mises-Fisher bivariate distribution \citep[Eq. 2.11]{Mardia75}. 
It is clear that the marginals of $p_g$ are uniform. Indeed, for any orthogonal matrix $Q$, with $\tfrac{1}{C} := \Eb_{(x^g,y^g) \sim \Sb^{d-1}\times \Sb^{d-1}} \exp(\tfrac{x^g \cdot y^g}{\sigma^2} )$ we have
\begin{align}
    p_g(Qx^g) = \Eb_{ y^g \sim \Sb^{d-1}} C \exp(\tfrac{Qx^g \cdot y^g}{\sigma^2} )= \Eb_{ y^g \sim \Sb^{d-1}} C \exp(\tfrac{x^g \cdot Q^\top y^g}{\sigma^2} ) = p_g(x^g) =: c,
\end{align}
where we have used the fact that the only invariant distribution on $\Sb^{d-1}$ wrt the orthogonal group is the uniform distribution.
Similarly, we have $p_g(y^g) \equiv c$ (where $c$ is the reciprocal of the surface area of the hypersphere $\Sb^{d-1}$).
The distribution (\ref{eq:vM-F}) has a nice interpretation\footnote{As suggested by the action editor, we may also interpret the distribution (\ref{eq:vM-F}) as a ``copula,'' i.e., a joint density on $\Sb^{d-1} \times \Sb^{d-1}$ with uniform marginals. Note that the conventional notion of copula replaces the hypersphere $\Sb^{d-1}$ with the unit interval $[0,1]$. More generally, we could consider the ``copula'' $p_g(x^g, y^g) \propto h(x^g \cdot y^g)$ for an increasing function $h$, to capture other types of correlation between the two views $x^g$ and $y^g$. } in terms of maximum entropy \citep{Mardia75}, and admits the factorization
\begin{align}
    p_g(x^g, y^g) = p_g(x^g) \cdot p_g(y^g | x^g) = p_g(y^g) \cdot p_g(x^g | y^g),
\end{align}
where the marginals $p_g(x^g)$ and $p_g(y^g)$ are uniform while the conditionals $p_g(y^g|x^g)$ and $p_g(x^g | y^g)$ again belong to the von Mises-Fisher distribution.

\vio{Combining \cref{eq:optimal_similarity1} and Assumption~\ref{assmp:joint_vMF}}, we can write the similarity function with the Gaussian kernel as follows: 
\begin{align}\label{eq:optimal_similarity}
\textstyle
s_f(x^g, y^g) = f' \circ G_\sigma(\|x^g - y^g\|^2).
\end{align}
\vio{As noted above, the product of marginals $p_g(x^g)p_g(y^g)$ is a constant, which has been absorbed into our definition of $G_\sigma$, see $\mu$ in Assumption~\ref{assmp:joint_vMF}.}
We observe that $s_f(\cdot)$ depends on the choice of $f$ as well,
thus we call it \textbf{$f$-Gaussian similarity}. As a result, we have
provided a new way to design the similarity function, again from the $f$-MI perspective.

\paragraph{Verifying Assumption \ref{assmp:joint_vMF}:}  
One may question that Assumption \ref{assmp:joint_vMF} can be too strong for practical usage. For example, replacing the Gaussian kernel $G_\sigma$ with any other decreasing function would also provide a valid assumption. However, we found that among several popular choices only the Gaussian kernel works well in practice. Also,  we can  empirically verify that Assumption \ref{assmp:joint_vMF} approximately holds.
To this end, it is sufficient to check
whether the log density, \ie, $\log p_g(x^g, y^g)$, is linear with the distance between each positive pair, \ie,  $\|x^g - y^g\|^2$. In \Cref{fig:gaussian}, we use the flow-based model RealNVP \citep{dinh2016density}\footnote{\teal{RealNVP applies real-valued non-volume preserving transformation for log-likelihood computation.}} to estimate the log density \blue{with a Gaussian prior and a uniform prior}, 
and learn the  feature encoder $g$ from SimCLR \citep{ChenKNH20}.
We observe that the linear relationship approximately holds for CIFAR-10 \footnote{The linear relationship in \Cref{fig:gaussian} might also depend on the data, i.e., the CIFAR-10 dataset here. In practice, other customized datasets might require additional verification. }. 

We will empirically compare our $f$-Gaussian similarity with the cosine similarity in \S\ref{sec:exp}.

\subsection{Implementation}
\label{subsec:imp}
With our designed $f$-Gaussian similarity $s_f$ we now have an implementable $f$-MICL objective in \cref{eq:objective}. 
Bringing the $f$-Gaussian Similarity $s_f$ in \cref{eq:optimal_similarity} into our objective \cref{eq:objective} we have a specific $f$-MICL objective:
\begin{align}\label{eq:objective_gaussian}
\underset{(x, y) \sim \ppos}{\Eb} s_f(x^g, y^g) - \underset{(x, y)\sim \squares}{\Eb} f^* \circ s_f(x^g, y^g).
\end{align}
Given a batch of $N$ samples, 
its empirical estimation is as follows:
\begin{align}\label{eq:objective_sample}
\frac{1}{N}\overset{N}{\underset{i=1}{\sum}} s_f(x_i^g, y_i^g) - \frac{1}{N(N -1)}\underset{i\neq j}{\sum} f^* \circ s_f(x_i^g, x_j^g),
\end{align}
where $x_i$ and $y_i$ are two types of data augmentation of the $i$-th sample, and $x_i$ and $x_j$ are different samples with \vio{independently sampled} data augmentations.

With the $f$-MICL objective in \cref{eq:objective_sample} we propose our algorithm for contrastive learning in Algorithm~\ref{alg:f-MICL}. To balance the two terms in our objective, we additionally include a weighting parameter $\alpha$ in front of the second term (which also absorbs the parameter $\mu$ in $G_\sigma$). This change can still be incorporated within our $f$-MICL framework, as we show in \Cref{sec:weighting}. 
Figure \ref{fig:f-MICL-intro} gives a high-level summary of our $f$-MICL framework. Given a batch of samples (\eg, images), we generate \emph{positive pairs} via data augmentation and \emph{negative pairs} using other augmented samples in the same batch. 
This sampling method follows SimCLR \citep{ChenKNH20}. 

\begin{algorithm}[t]
\DontPrintSemicolon
\KwIn{batch size $N$, function $f$, weighting parameter $\alpha$, constant $\mu$ (in $G_\sigma$), variance $\sigma^2$}

\For{each sampled mini-batch $\{z_i\}_{i=1}^N$}{
\For{$k$ in $1, \dots, N$}{
randomly sample two augmentation functions $t_1$ and $ t_2$ \\
$y_k \leftarrow t_1(z_k)$, $x_k \leftarrow t_2(z_k)$ \\
}
define $s_f(x^g, y^g) = f'\circ G_\sigma(\|x^g - y^g\|^2)$ \\
compute ${-\cal L}$ as
\begin{align}
\nonumber
\frac{1}{N}\sum_{i=1}^N s_f(x_i^g, y_i^g) \nonumber - \frac{\alpha}{N(N -1)}\sum_{i\neq j}f^* \circ s_f(x_i^g, x_j^g) 
\end{align}
\\
update $g$ by minimizing ${\cal L}$
}
\caption{
$f$-MICL}
\label{alg:f-MICL}
\end{algorithm}

\begin{figure*}
    \centering
    \includegraphics[width=0.9\textwidth]{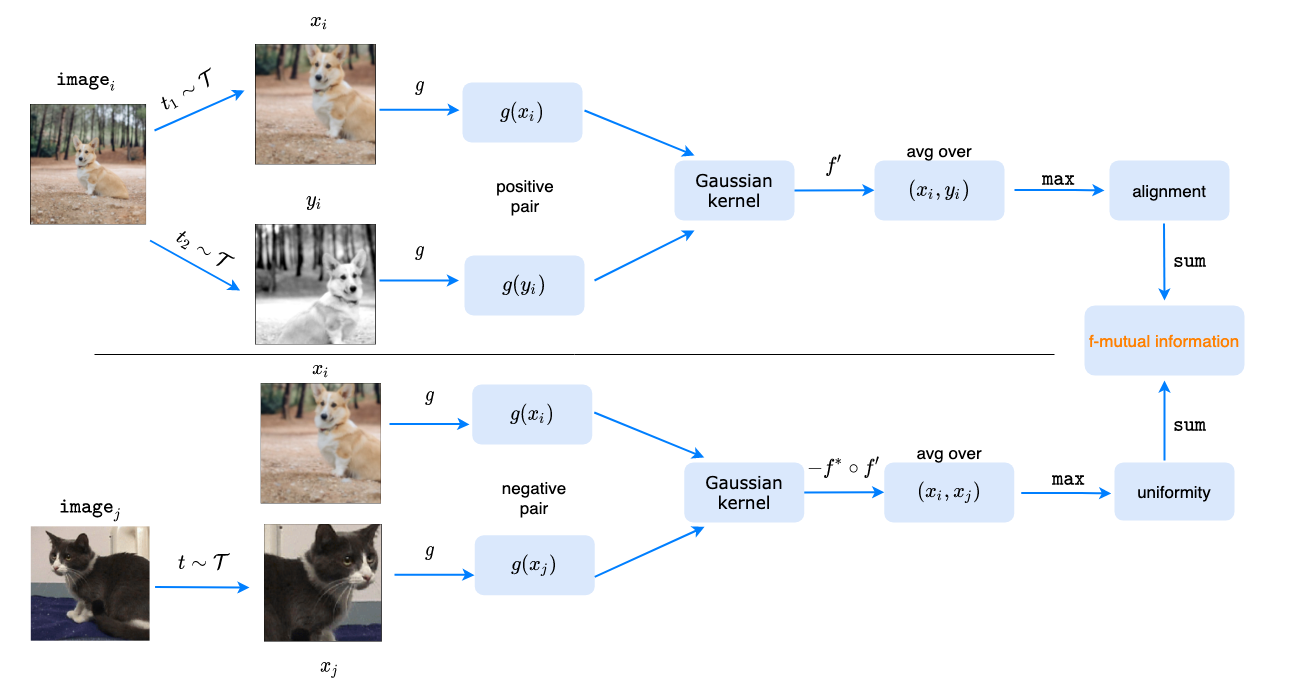}
    \caption{Network architecture of $f$-MICL. $\mathtt{image}_i$: the $i^{\rm th}$ image in the current batch; $f$: the function used in the $f$-mutual information (\S\ref{sec:prem}); $g$: feature embedding; $t$, $t_1$, $t_2$: augmentation functions drawn from the same family  $\Tc$ of augmentations; $f'$: the derivative; $f^*$: the Fenchel conjugate. The symbol $\circ$ denotes the function composition. The sum of the two terms gives the variational lower bound of $f$-mutual information. \teal{$x_i$ and $y_i$ are two types of data augmentation of the $i$-th sample, and $x_i$ and $x_j$ are different samples with \vio{independently sampled} data augmentations.} \vio{\texttt{max} stands for maximization.} See \cref{eq:objective_sample} for more details.}
    \label{fig:f-MICL-intro}
\end{figure*}

\subsection{$f$-MICL family}
\label{sec:infonce}
In this section, we will deepen the understanding of $f$-MICL by drawing connections with  some popular constrastive learning methods. 



\noindent \textbf{Connection with InfoNCE:} 
Firstly, we show that InfoNCE is an upper bound of $f$-MICL. 
Recall  our $f$-MICL objective in  \cref{eq:objective}, and the popular InfoNCE objective $\mathcal{L}_\text{InfoNCE}$  as follows (here we take the maximization) \citep{oord2018representation}:
\begin{align}
    \mathbb{E}_{(x, y) \sim \ppos} s(x^g,y^g) - \mathbb{E}_{ x\sim p_d} \log \mathbb{E}_{ y\sim p_d} \exp (s(x^g,y^g)).
\end{align}
Consider that we perform a Donsker-Varadhan (DV) shift transformation $v$ \citep{dv,tsai2021self} from  \cref{eq:objective} such that by taking the maximum over the transformation we have:
\begin{align}
    &\sup_{v\in \mathbb{R}} \Big( \mathbb{E}_{(x, y) \sim \ppos} s(x^g,y^g) - v   - \mathbb{E}_{(x, y)\sim \squares} f^*\circ (s(x^g,y^g)-v)   \Big).
    \label{eq:dv}
\end{align}
In practice, such a shift transformation can be approximated by a scaling factor  ($\alpha$ in Algorithm \ref{alg:f-MICL}) such that  \cref{eq:objective} and \cref{eq:dv} are equivalent.
Given that $f$ is the KL divergence, thus $f(u) = u \log u$ and $f^*(t) = \exp(t-1)$ from Table \ref{tbl:choices_f_div}, 
the maximizer of $v$ in \cref{eq:dv} occurs at $v_\star = \log (\mathbb{E}_{(x, y)\sim \squares} s(x^g,y^g))-1$). With $v_\star$,  \cref{eq:dv} can be written as follows:
\begin{align}
    \mathbb{E}_{(x, y) \sim \ppos} s(x^g,y^g) - \log \mathbb{E}_{(x, y)\sim \squares} \exp (s(x^g,y^g)).
    \label{eq:infonce}
\end{align}
According to Jensen's inequality we have
\begin{align}
    &\mathbb{E}_{ x\sim p(\cdot)} \log \mathbb{E}_{ y\sim p(\cdot)} \exp (s(x^g,y^g)) \leq \log \mathbb{E}_{(x, y)\sim \squares} \exp (s(x^g,y^g)).
\end{align}
Therefore,
\begin{align}
    &\mathbb{E}_{(x, y) \sim \ppos} s(x^g,y^g) - \log \mathbb{E}_{(x, y)\sim \squares} \exp (s(x^g,y^g))
    \leq \mathcal{L}_{\text{InfoNCE}}.
\end{align}

\begin{figure}
  \begin{center}
    \includegraphics[width=0.4\textwidth]{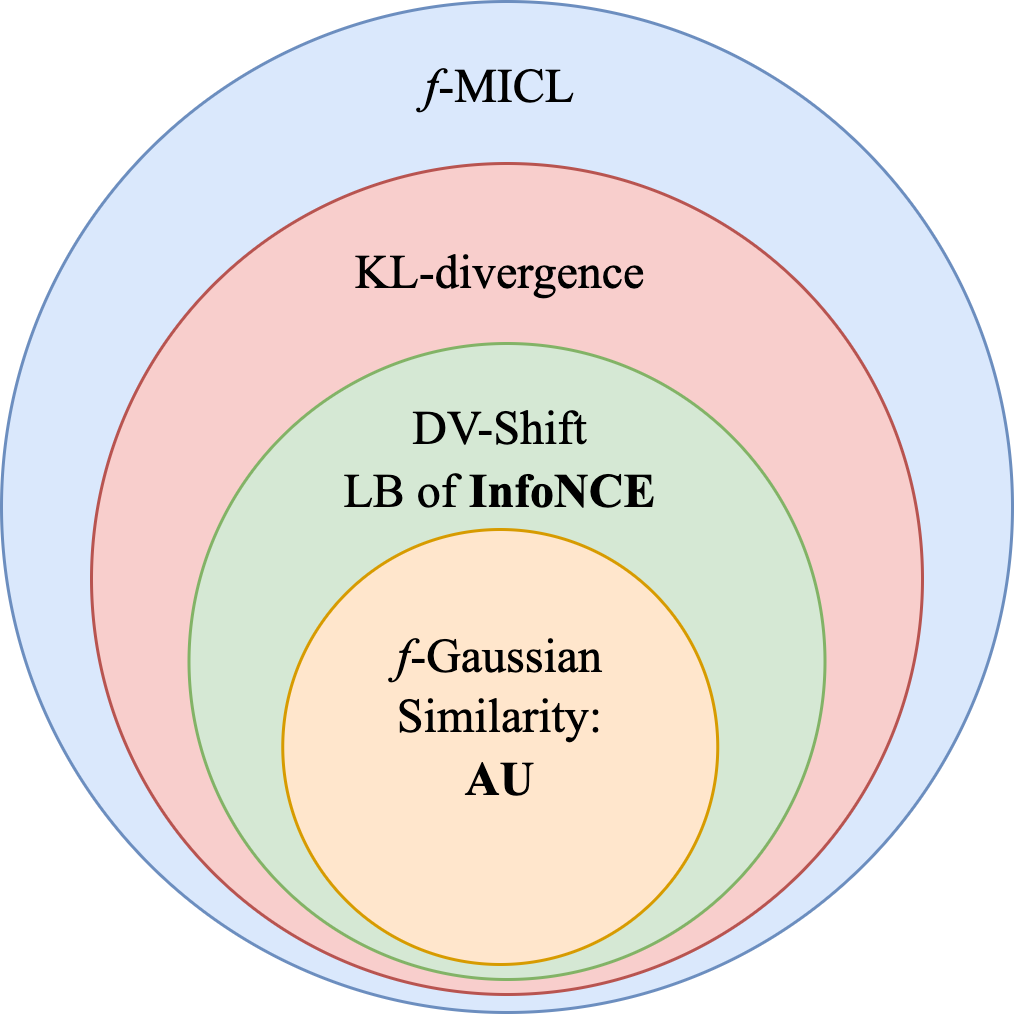}
  \end{center}
  \caption{$f$-MICL generalizes InfoNCE-based objectives.}
  \label{fig:relation}
\end{figure}

The above transformation shows that the InfoNCE loss is an upper bound of the $f$-MICL objective. In other words, maximizing $f$-MICL can potentially increase the InfoNCE objective.




\textbf{Connection with Alignment and Uniformity (AU) \citep{wang2020understanding}: 
} 
We further show that the Alignment and Uniformity (AU) loss is a special cases of $f$-MICL. 
\cite{wang2020understanding} shows that InfoNCE approximately aligns positive feature embeddings while encouraging uniformly distributed negative ones. \cite{wang2020understanding} further proposes a new objective which quantifies such properties. Here we show that this new objective is essentially a subclass of the InfoNCE loss under the $f$-MICL framework. Concretely,  applying the $f$-Gaussian similarity function for the KL divergence, we have $f'(u) = \log u +1$ from Table \ref{tbl:choices_f_div} and thus  $s_f (x^g, y^g) = -\|x^g - y^g\|^2$. Using $s_f (x^g, y^g)$ in \cref{eq:infonce} we can recover the AU objective:
\begin{align}
    -\mathbb{E}_{(x, y) \sim \ppos} \|x^g - y^g\|^2 - \log \mathbb{E}_{(x, y)\sim \squares} \left[ \exp (-\|x^g - y^g\|^2) \right].
\end{align}
Note that for KL, this is equivalent to the cosine similarity with a scaling factor: $-\|x^g - y^g\|^2=2x^g \cdot y^g-2$.

\noindent \textbf{Connection with the Spectral Contrastive Loss: }
Here we show that $f$-MICL objective is closely related to the Spectral Contrastive Loss \citep{haochen2021provable}.
Recall our objective:
\begin{align}\label{eq:objective1}
\mathbb{E}_{(x, y) \sim \ppos} s(x^g,y^g) - \mathbb{E}_{(x, y)\sim \squares} f^*\circ s(x^g,y^g),  
\end{align}
where $s_f(x^g, y^g) = f' \circ G_\sigma(\|x^g - y^g\|^2)$. If we choose the Pearson $\chi^2$  divergence, where $f(u) = (u-1)^2, f'(u)=2(u-1), f^*\circ f'(u) = u^2-1$, we have our $\chi^2$-MICL objective:
 \begin{align}\label{eq:chi}
&2\mathbb{E}_{(x, y) \sim \ppos} G_\sigma(\|x^g - y^g\|^2) -\mathbb{E}_{(x, y)\sim \squares} G_\sigma(\|x^g - y^g\|^2)^2 - 3.  
\end{align}
This recovers the spectral contrastive loss exactly if we choose the proper hyperparameter and apply the cosine similarity instead. Thus we generalize the spectral contrastive loss as a special case of $\chi^2$-MICL.

\noindent \textbf{More on AU:} 
Finally, based on our objective in \cref{eq:objective_gaussian} we will show that the alignment and uniformity (AU) property of InfoNCE also extends to the general $f$-MICL family:
(1) Alignment:
In the ideal case, maximizing the first term of \cref{eq:objective_gaussian} would yield $x^g = y^g$ for all $(x, y) \sim \ppos$, \ie, similar sample pairs should have aligned representations. Note that the derivative $f'$ is increasing since $f$ is convex.
(2) Uniformity:
We demonstrate the uniformity property by minimizing the second term of \cref{eq:objective_gaussian}, or more rigorously and realistically, its empirical version in \cref{eq:objective_sample}.

\begin{restatable}[\textbf{Uniformity}]{thm}{uniformDis}\label{thm:uniformity}
Suppose that the batch size $N$ satisfies $2\leq N \leq d + 1$, with $d$ the dimension of the feature space. If the real function
\begin{align}\label{eq:f-div_property}
h(t) = f^* \circ f'\circ G_\sigma(t) \mbox{ is strictly convex on }[0, 4],
\end{align}
then all minimizers of the second term of \cref{eq:objective_sample}, \ie,
$\sum_{i\neq j}f^* \circ s_f(x_i^g, x_j^g)$, 
satisfy the following condition: the feature representations of all samples  
are distributed uniformly on the unit hypersphere $\Sb^{d-1}$. 
\end{restatable}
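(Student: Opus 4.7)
The plan is to collapse the sum into a function of pairwise squared distances, and then combine a Jensen-type lower bound (coming from the strict convexity of $h$) with an algebraic bound on the total squared distance (coming from the points lying on the unit sphere) to pin down the minimizing configuration.

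First I would substitute $s_f(x^g, y^g) = f' \circ G_\sigma(\|x^g - y^g\|^2)$ to rewrite the objective as
\begin{align*}
\sum_{i \neq j} f^* \circ s_f(x_i^g, x_j^g) \;=\; \sum_{i \neq j} h(d_{ij}^2), \qquad d_{ij}^2 := \|x_i^g - x_j^g\|^2.
\end{align*}
Since $x_i^g \in \Sb^{d-1}$, every $d_{ij}^2$ lies in $[0, 4]$, so the strict convexity hypothesis on $h$ applies over the full relevant range, and Jensen's inequality yields
\begin{align*}
\sum_{i \neq j} h(d_{ij}^2) \;\geq\; N(N-1)\, h(\bar d^2), \qquad \bar d^2 := \frac{1}{N(N-1)} \sum_{i \neq j} d_{ij}^2,
\end{align*}
with equality (by strict convexity) iff all $d_{ij}^2$ coincide.

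Next I would bound $\bar d^2$ via the polarization identity
\begin{align*}
\sum_{i \neq j} d_{ij}^2 \;=\; 2N^2 - 2\Big\|\sum_i x_i^g\Big\|^2 \;\leq\; 2N^2,
\end{align*}
so $\bar d^2 \leq \tfrac{2N}{N-1}$, with equality iff $\sum_i x_i^g = \vzero$. Because $G_\sigma$ is strictly decreasing and both $f'$ and $f^*$ are non-decreasing (from convexity of $f$ and the monotone-conjugate construction), $h$ is non-increasing; together with strict convexity this forces $h$ to be strictly decreasing on $[0, 4]$. Applying this monotonicity to the previous step gives
\begin{align*}
\sum_{i \neq j} h(d_{ij}^2) \;\geq\; N(N-1)\, h\!\left(\tfrac{2N}{N-1}\right).
\end{align*}
Equality in both inequalities forces simultaneously (a) all pairwise distances to be equal and (b) the centroid $\sum_i x_i^g$ to vanish. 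Jointly these characterize a regular $(N{-}1)$-simplex inscribed in $\Sb^{d-1}$, whose existence in $\Rb^d$ is precisely guaranteed by the dimension constraint $N \leq d+1$; this configuration realizes the bound, so the minimizers are exactly the regular simplex arrangements, the canonical uniform arrangement of $N$ points on the hypersphere (unique up to orthogonal transformations).

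The main obstacle is the equality analysis. Jensen alone only forces pairwise distances to be equal, and on the sphere this describes a one-parameter family (equilateral configurations that can be shrunk toward a point or expanded toward a great-circle simplex); it is the centroid-at-origin condition produced by the second bound that pins down the size and yields the regular simplex, so the delicate step is chaining the two equality conditions together correctly. A minor technical point that needs care is verifying that $h$ is strictly (not just weakly) decreasing on $[0, 4]$, which follows from composing the monotonicities of $G_\sigma$, $f'$, and $f^*$ with the strict convexity hypothesis on $h$.
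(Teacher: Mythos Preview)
Your proposal is correct and follows essentially the same route as the paper: apply Jensen's inequality using the strict convexity of $h$, bound the mean squared pairwise distance via the centroid identity $\sum_{i\neq j}\|x_i^g-x_j^g\|^2=2N^2-2\|\sum_i x_i^g\|^2$, use the monotonicity of $h$, and combine the two equality conditions to force a centered regular simplex (feasible precisely when $N\le d+1$). Your version, which sums over $i\ne j$ rather than all $i,j$, is in fact slightly cleaner in the Jensen step, and your explicit justification that strict convexity plus non-increasing forces $h$ to be strictly decreasing is a nice touch that the paper leaves implicit.
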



In Theorem \ref{thm:uniformity}, the assumption $N\leq d + 1$ is always satisfied in our experiments in \S \ref{sec:exp}. For instance, on CIFAR-10 we chose $N = d = 512$. 
Also, we claim that the samples are  ``distributed uniformly'' if the feature vectors form a regular simplex, 
 and thus the distances between all sample pairs are the same. Although minimizing the negative term gives uniformity, the positive term is also needed for aligning similar pairs, as we observe in \S \ref{sec:exp}. This implies the \emph{tradeoff} between alignment and uniformity.

In fact, \cref{eq:f-div_property} provides us guidance to select proper $f$-divergences for uniformity. In Table \ref{tbl:choices_f_div}, we list some common choices of $f$-divergences. By inspecting the last column and using the definition of $G_\sigma$, we can easily verify that they all satisfy \cref{eq:f-div_property}. 
However, this is not true for all $f$-divergences.
In Appendix \ref{app:f-div}
we also provide some counterexamples that violate \cref{eq:f-div_property} and thus Theorem \ref{thm:uniformity}, such as the Reversed Kullback--Leibler (RKL) and the Neynman $\chi^2$ divergences. 
Experimentally, we found that these divergences generally result in feature collapse (\ie, all feature vectors are the same) and thus poor performance in downstream applications.
\section{Experiments}\label{sec:exp}
In this section, we empirically evaluate the analysis on our provided answers:
(1) Can we go beyond the KL-based objective (\S\ref{sec:exp_obj}): we apply various $f$-MICL objectives to popular vision and language datasets. In particular, we show that under the same network architecture design, $f$-MICL can always provide a better choice of objective. We observe that the best-performing $f$-divergence is largely dataset dependent.
(2) Can we design a better similarity function (\S \ref{sec:similarity}): we show that the proposed $f$-Gaussian similarity is more powerful than the heuristic cosine similarity, regardless of the choice of $f$. 
Moreover, we confirm empirically that $f$-MICL extends the nice property of alignment and uniformity  in \S\ref{sec:au}.

\vspace{-0.2em}
\subsection{Experimental settings}
\vspace{-0.2em}
Our detailed settings can be found in Appendix \ref{app:add_exp}. 
In all our experiments, we change only the objective of different methods for fair comparison.
We use the $f$-Gaussian similarity in $f$-MICL by default. 

\textbf{Vision task.} \,
Our vision datasets include CIFAR-10 \citep{krizhevsky2009learning}, STL-10 \citep{coates2011analysis}, TinyImageNet \citep{chrabaszcz2017downsampled}, and ImageNet \citep{deng2009imagenet} for image classification. 
After learning the feature embeddings, we evaluate the quality of representations using the test accuracy via a linear classifier. \teal{Note that we use $\alpha=40$ across all vision experiments}. \\
(1) Smaller datasets: For feature encoders,  we use ResNet-18 \citep{he2016deep} for CIFAR-10;  ResNet-50 \citep{he2016deep} for the rest. Our implementation is based on  SimCLR \citep{ChenKNH20}, \teal{where we used the same 3-layer projection head during training}. All models are trained for \teal{800} epochs.\\
(2) ImageNet: We choose Vision Transformer (ViT-S) \citep{vit} as our feature encoder. We choose the smaller ViT-S model with 6 blocks because larger ViT models are extremely expensive to train on GPUs. 
Our implementations are based on MoCo V3 \citep{mocov3}, \teal{where models are trained for 1000 epochs}.


\textbf{Language task.} To show the wide applicability of our $f$-MICL framework, we also conduct experiments on a natural language dataset, English Wikipedia \citep{gao2021simcse}. We follow the experimental setting in \citep{gao2021simcse}, which applies BERT-based models to SimCLR \citep{devlin2019bert,liu2019roberta}. Specifically, we choose the BERT$_{\tt base}$ model due to limited computing resources. \teal{For $f$-MICL objectives, we choose $\alpha=409600$.} The application task is called semantic textual similarity (STS \citep{agirre2013sem}) and we report the averaged Spearman's correlation in Table \ref{tab:accs} for comparison.





\vspace{-0.2em}
\subsection{$f$-MICL objectives}
\label{sec:exp_obj}
\vspace{-0.2em}
\textbf{Smaller datasets and language task.} 
We first compare $f$-MICL with several InfoNCE-based  contrastive learning algorithms (i.e., SimCLR \citep{ChenKNH20}, MoCo \citep{he2020momentum}, and AU \citep{wang2020understanding}) on smaller datasets and the language task in Table \ref{tab:accs}. Here we choose four $f$-divergences with the best overall performance. See Appendix \ref{app:add_exp} for results on other $f$-divergences.

From Table \ref{tab:accs} we observe that: (1) As we have shown in  \S\ref{sec:infonce} that $f$-MICL generalizes InfoNCE-based objectives, empirically KL-MICL achieves similar performance to the baselines. In practice, we can tune the hyperparameter $\alpha$ such that KL-MICL outperforms the InfoNCE-based objectives.
(2) KL-MICL is not always the optimal choice. We can see that the best-performing $f$-MICL objectives refer to four different $f$-divergences on four datasets.

\begin{table*}[t]
\centering
\caption{
We compare the test accuracy (\%) obtained with the linear evaluation on the vision datasets. On the Wikipedia dataset, we compare the semantic textual similarity (STS) via the Spearman's correlation.
For each dataset and each method we take three different runs to get the mean and the standard derivation.
}
\label{tab:accs}
\setlength\tabcolsep{4pt}
\begin{tabular}{c@{\hskip2ex}cccc@{\hskip2ex}cccc}
\toprule
\multirow{2}{*}[-.6ex]{\bf Dataset} & \multicolumn{3}{c}{\bf Baselines} & \multicolumn{4}{c}{\bf $f$-MICL}\\
\cmidrule(l{0pt}r{10pt}){2-4}\cmidrule(l{-1pt}r{-1pt}){5-8}
&  MoCo & SimCLR 
& AU 
& \phantom{kk}KL\phantom{kk} & JS & Pearson  & VLC \\
\midrule
CIFAR-10 &90.30$_{\pm 0.19}$ & 89.71$_{\pm 0.37}$ 
& 90.41$_{\pm 0.26}$ 
&\textbf{90.61}$_{\bf\pm 0.47}$ & 89.66$_{\pm 0.28}$
& 89.35 $_{\pm 0.52}$ & 89.13 $_{\pm 0.33}$\\
STL-10 &83.69$_{\pm 0.22}$ & 82.97$_{\pm 0.32}$ & 84.44$_{\pm 0.19}$  
&85.33$_{\pm 0.39}$ & {\bf 85.94}$_{\bf \pm 0.17}$ & 82.64$_{\pm 0.37}$ & {\bf 85.94}$_{\bf \pm 0.72}$ \\
TinyImageNet & 35.72$_{\pm 0.17}$ & 30.56$_{\pm 0.28}$ &41.20$_{\pm 0.19}$ 
& 39.46$_{\pm 0.20}$ & 42.98$_{ \pm 0.18}$  &\bf 43.45$_{
\bf \pm 0.54}$  & 38.65$_{\pm 0.45}$ \\

\midrule
Wikipedia & 77.88$_{\pm 0.15}$ & 77.40$_{\pm 0.12}$ & 77.95$_{\pm 0.08}$ 
& {\bf 78.02}$_{\bf \pm 0.13}$ & 76.76$_{\pm 0.09}$ & 77.59$_{\pm 0.12}$& 55.07$_{\pm 0.13}$\\
\bottomrule
\end{tabular}
\end{table*}

\begin{table*}[t]
\centering
\caption{
We compare the test accuracy (\%) with SOTA methods on ImageNet. We take three different runs to get the mean, where the standard derivations are less than 0.1\% for $f$-MICL. 
}
\label{tab:imnet}
\setlength\tabcolsep{6pt}
\begin{tabular}{c@{\hskip2ex}cccccc@{\hskip2ex}ccc}
\toprule
\multirow{2}{*}[-.6ex]{\bf Dataset} & \multicolumn{6}{c}{\bf Baselines} & \multicolumn{3}{c}{\bf $f$-MICL}\\
\cmidrule(l{0pt}r{10pt}){2-7}\cmidrule(l{-1pt}r{-1pt}){8-10}
& SwAV  
& BYOL
& Barlow Twins & VICReg & R\'enyiCL & MoCo v3  & KL & JS & Pearson\\
\midrule
ImageNet & 75.3 & 74.3 & 73.2 & 73.2 & \bf 76.2 & 73.2 & 73.9 & \bf 76.5 & 74.6\\
\bottomrule
\end{tabular}
\end{table*}

\begin{table}[t]
\centering
\caption{Comparison between the cosine and \teal{$f$}-Gaussian similarities on CIFAR-10 with the test accuracy (\%). For the Tsallis-$\alpha$ divergence we take $\alpha = 3$. 
}
\label{tab:cos_vs_gaussian}
\setlength\tabcolsep{15pt}
\scalebox{1.0}{\begin{tabular}{cccccccccc}
\toprule
Similarity 
& KL & JS & Pearson & SH & Tsallis-$\alpha$ & VLC \\ \midrule\midrule
\multirow{2}{*}{Cosine} 
& 89.95 & 88.06 & 87.79 & 87.06 & 88.55 & 10.00  \\
& ${\pm 0.26}$& ${\pm 0.33}$& ${\pm 0.42}$& ${\pm 0.55}$& ${\pm 0.28}$& ${\pm 0.00}$\\
\midrule
\multirow{2}{*}{Gaussian} 
& \bf 90.61 & \bf 89.66 & {\bf 89.35} & \bf 89.52 & \bf 89.15 &\bf 89.13\\
& ${\pm 0.47}$& ${\pm 0.28}$& ${\pm 0.52}$& ${\pm 0.25}$& ${\pm 0.42}$& ${\pm 0.33}$\\
\bottomrule
\end{tabular}}
 \end{table}

The above results indicate that
$f$-MICL can provide a wide range of objective choices for the downstream tasks. Although how to derive an optimal $f$-divergence deserves more study in theory, in practice we can select the best $f$ among several common $f$-divergences on a validation set.


Besides the $f$-divergences in Table \ref{tbl:choices_f_div},
in Theorem \ref{thm:uniformity} we have identified non-satisfying $f$-divergences.
In our experiments, we found that
applying these $f$-divergences such as the RKL and Neyman $\chi^2$ divergences would result in feature collapse. For example, in Figure \ref{fig:uniformity_batch} we show  that with RKL  the features all collapse to a constant. 

\begin{figure*}
    \centering
    \begin{subfigure}{0.61\textwidth}
        \includegraphics[width=1.0\textwidth]{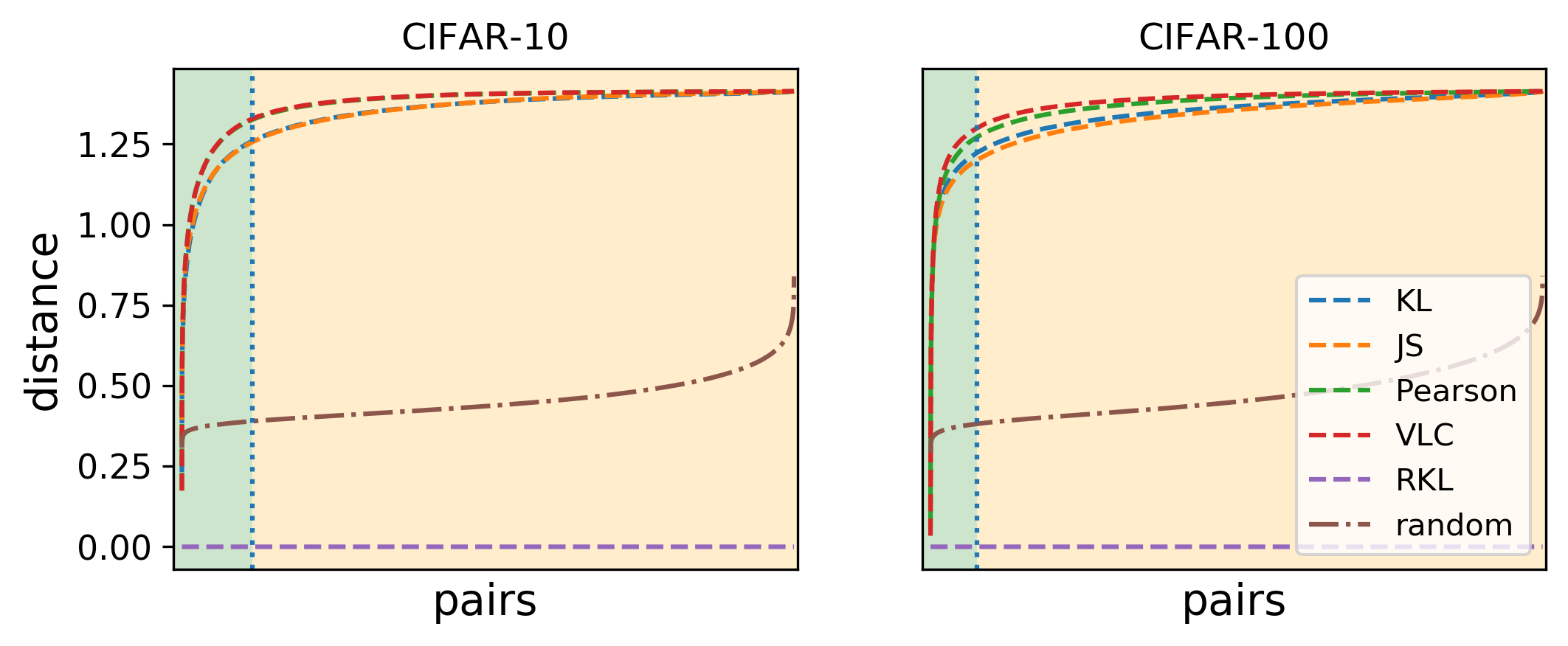}
    \end{subfigure}
    \begin{subfigure}{0.31\textwidth}
        \vskip0.75ex
        \includegraphics[width=1.0\textwidth]{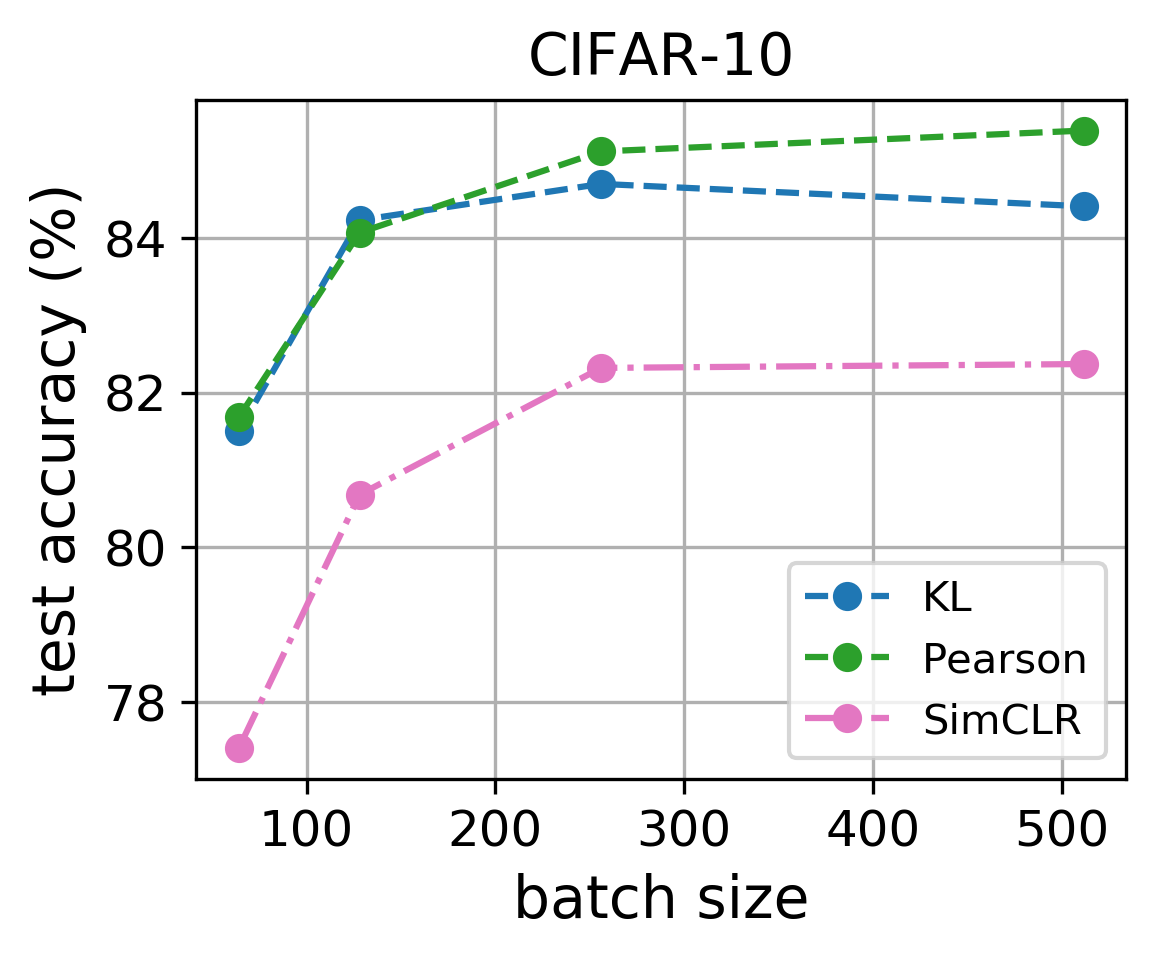}
    \end{subfigure}
    \caption{ {\bf (left and middle)} Distances between pairs of normalized features within a batch. {\bf Green region:} similar pairs. {\bf Orange region:} dissimilar pairs. $f$-MICL gives nearly uniform distances for dissimilar pairs for the $f$-divergences in Table \ref{tbl:choices_f_div}. For non-satisfying $f$-divergences such as the RKL, the features collapse to a constant and thus the distances are zero. {\bf (right)} The test accuracy v.s.~the batch size after training $200$ epochs for all algorithms.}
    \label{fig:uniformity_batch}
\end{figure*}

\textbf{ImageNet results:} To further demonstrate the efficacy of $f$-MICL we then compare  with several popular self-supervised learning methods,  including both contrastive-based and non contrastive-based ones. 
These methods can be categorized into the ResNet-based (SwAV \citep{swav}, BYOL \citep{huynh2020boosting}, Barlow Twins \citep{barlow}, VICReg \citep{vicreg}) and R\'enyiCL \citep{lee2022renyicl}, and the ViT-based \citep{vit}. For the ResNet-based methods, we directly retrieve results from \citep{vicreg}, which are obtained by training ResNet-50 models for 1000 epochs. Chen et al.~\citep{mocov3} show that these two types of methods are directly comparable in terms of the model size and supervised learning performance.
For the ViT-based method and our $f$-MICL, we apply ViT-S for 1000 epochs. Specifically, our $f$-MICL follows MoCo v3 with only the objective changed for different choices of the $f$-divergence.

Results in Table \ref{tab:imnet} show that: (1) By only changing the objective function, our method improves MoCo v3 by 2.1\% using JS-MICL. (2) $f$-MICL objectives are comparable with the ResNet-based methods (e.g., SwAV and R\'enyiCL). 

Overall, our experiments confirm that $f$-MICL can provide a better choice of objective than InfoNCE on a variety of datasets, tasks, and encoder architectures.

\subsection{$f$-Gaussian similarity}
\label{sec:similarity}
Next, we want to examine the effect of our similarity function \teal{while fixing the $f$-divergences}. 
In Table \ref{tab:cos_vs_gaussian} we compare the cosine  and $f$-Gaussian similarities for different $f$-divergences on CIFAR-10. It can be seen that  under our $f$-MICL framework the $f$-Gaussian similarity consistently outperforms the cosine similarity for various  $f$-divergences\footnote{As we have shown the equivalence between cosine and Gaussian similarity on KL, the difference results on KL just show the choice of different scaling factors.}. This agrees with our Theorem \ref{lem:optimal_sim} and \cref{eq:optimal_similarity}, and  also implies the validity of Assumption \ref{assmp:joint_vMF}. In particular, we identify that without the theoretical guarantee, the heuristic cosine similarity would fail for certain $f$-MICL objectives (\eg, VLC).

\subsection{Alignment and uniformity test} 
\label{sec:au}
We empirically check the properties of alignment and uniformity for $f$-MICL by
plotting the pairwise distance $\|x_i^g - x_j^g\|$ of the feature representations within the same batch on CIFAR-10. We compute the distances between the normalized features of every pair from a random batch, and then sort the pairs in increasing order. From Figure \ref{fig:uniformity_batch} we can see that $f$-MICL gives nearly uniform distances for dissimilar pairs (orange regions) on both datasets with various proper $f$-divergences.  In contrast, a random initialized model gives a less uniform distribution for dissimilar pairs. Besides, we observe small pairwise distances for similar pairs (green regions). 

\subsection{Sensitivity to batch size} 
Finally, we study the sensitivity to the batch size of our $f$-MICL framework on  CIFAR-10.
On the right panel of Figure \ref{fig:uniformity_batch}, we evaluate the classification accuracy by varying the batch size for different $f$-divergences and SimCLR. We can see that for all different batch sizes and with the proper choice of $f$-divergences, our performance is always better than SimCLR. In other words, we require fewer negative samples to achieve the same performance.

\section{Related Work}
\label{app:related}

\textbf{Contrastive learning.} \; Self-supervised contrastive learning learns representations by contrasting sample pairs. Recently it has been shown analytically that improving the contrastiveness can benefit the downstream applications \citep{SaunshiPAKK19,tosh2021contrastive}. For popular contrastive learning methods such as Contrastive Predictive Coding (CPC) \citep{oord2018representation}, SimCLR \citep{ChenKNH20}, and MoCo \citep{he2020momentum}, their loss functions can be interpreted as a lower bound of mutual information, which is essentially the KL divergence between the joint distribution and the product of margin distributions. Besides the KL divergence, other statistical divergences or distances have been individually studied under the context of contrastive learning, \eg, the Wasserstein distance \citep{ozair2019wasserstein}, Pearson $\chi^2 $ divergence \citep{tsai2021self}, and Jensen--Shannon divergence \citep{hjelm2018learning}. 


\textbf{$f$-divergences} have been widely used in generative models \citep{nowozin2016f} and domain adaptation \citep{acuna2021fdomainadversarial}, for measuring the discrepancy of two distributions, where the variational lower bound is often employed for estimation.
Compared to $f$-GAN \citep{nowozin2016f} and $f$-DAL \citep{acuna2021fdomainadversarial} which minimize the $f$-divergence between two different distributions, our $f$-MICL objective is to maximize the $f$-divergence between the joint distribution and the product of marginal distributions. This agrees with our purpose of contrasting sample pairs. Moreover, we provide a theoretical criterion for choosing proper $f$-divergences. 

\textbf{Mutual Information} also plays an important role in the context of deep representation learning \citep{tian2020contrastive,bachman2019learning,hjelm2018learning,TianSPKSI20, poole2019variational, belghazi2018mine}.  \emph{Loss function wise}, our losses partially cover the losses in the literature and generalizes them: e.g., \cite{poole2019variational} considers several variational lower bounds of mutual information, where we generalize the DV objective;
(b) \emph{application-wise}, none of them considers contrastive learning: e.g., \cite{poole2019variational} considers mutual information estimation, \cite{belghazi2018mine} improves adversarial generative models, \cite{hjelm2018learning} considers representation learning that maximizes local and global information. 

\textbf{Metric learning.} Our work is closely related to metric learning \citep{kaya2019deep,suarez2018tutorial}, which aims to learn a distance metric bringing similar objects closer and distancing dissimilar objects further. In contrastive learning, a pre-defined similarity metric, \eg, the cosine similarity \citep{ChenKNH20,he2020momentum} or a bilinear function \citep{oord2018representation,tian2020contrastive,henaff2019data} is commonly used to measure the sample similarity. These pre-designed metrics may not necessarily lead to satisfactory performance in practice. Comparably, the design of our similarity function is empirically tailored for contrastive learning.

\blue{Finally, we summarize existing representation learning methods that utilize $f$-divergences and compare with $f$-MICL in \Cref{tab:fdivergence} for a clear view of the literature.  }

\begin{table}[t]
\centering
\caption{\blue{Comparison between different representation  learning methods that apply $f$-divergences.}}
\label{tab:fdivergence}
\setlength\tabcolsep{2pt}
\scalebox{1.0}{\begin{tabular}{cccccc}
\toprule
Method
& Objective & $f$-divergence & Similarity 
& Task \\
\midrule\midrule
CPC \citep{oord2018representation} 
& InfoNCE & KL & log-bilinear 
& predictive coding  \\
RPC \citep{tsai2021self} 
& RPC & Pearson $\chi^2$ & cosine 
& predictive coding\\ 
\midrule
MINE \citep{belghazi2018mine} 
& DV bound of MI & KL & neural net 
& GAN generation \\ 
\midrule
DIM \citep{hjelm2018learning} 
& JSD/InfoNCE & JS \& KL & neural net  
& representation learning \\
\citet{poole2019variational} & MI bounds & KL & joint/separable 
& representation learning\\
\midrule
SimCLR \citep{ChenKNH20} & InfoNCE & KL & cosine 
& contrastive learning \\
MoCo \citep{he2020momentum} & InfoNCE & KL & cosine 
& contrastive learning\\
AU \citep{wang2020understanding} & InfoNCE & KL & Gaussian 
&contrastive learning \\
\bf $f$-MICL (Ours) & $f$-MI &
general & $f$-Gaussian 
& contrastive learning\\

\bottomrule
\end{tabular}}
 \end{table}
\section{Conclusion}

We developed $f$-MICL for contrastive learning, which generalizes the  KL-based mutual information to the $f$-mutual information. With $f$-MICL we provided a broad spectrum of objective choices with better downstream performance. We also proposed a novel $f$-Gaussian similarity function, which shows superior performance to the commonly used cosine similarity. In addition, we confirmed the generalization of $f$-MICL by comparing with popular InfoNCE-based objectives.   Empirically, we exhibited the efficacy of $f$-MICL across a wide range of datasets  from both vision and 
natural language.

\textbf{Limitations and future work.} 
While $f$-MICL provides a variety of objective functions, it is yet unclear how to choose an optimal $f$ based on a task and a dataset in theory, such that we usually rely on a validation set in practice for selection.
An interesting future work is to learn an optimal $f$-divergence using a parametrized neural network. 
Moreover, \cite{lee2022renyicl} applied Skew R\'enyi divergence for contrastive learning. However, we observe that applying R\'enyi-MICL naively leads to a large variance (similar to Section 4.1 in \citealt{lee2022renyicl}), and we leave the discussion on skew divergences for future works. Additionally, \cite{McAllesterS20} showed that there exist some inherent statistical limitations on accurately estimating the mutual information with various lower bounds. In future work it would be interesting to examine if such limitations extend to $f$-MI, and if a limited estimation of $f$-MI necessarily affects $f$-MICL whose goal is to compare and learn representations through (lower bounds of) $f$-MI.




\newpage

\section*{Acknowledgement}
We thank the reviewers and the action editor for their constructive comments. Part of this work was performed during YL's internship at NRC. YY thanks NSERC and CIFAR for funding support. 
Resources used in preparing this research were provided, in part, by the Province of Ontario, the Government of Canada through CIFAR, and companies sponsoring the Vector Institute.

\bibliography{main}
\bibliographystyle{tmlr}

\newpage
\appendix

\section{Additional theoretical results}\label{app:theory}

In this appendix, we provide additional theoretical results, including additional $f$-divergences and the theory for weighting parameters.

\textbf{Notations.} We assume that a dominating measure $\lambda$ (e.g.~Lebesgue) is given and all other probability measures are represented as some density w.r.t.~$\lambda$. 
Given the joint density $p(x,y)$, we denote $p(x)  = \int p(x,y) \mathrm{d}\lambda(y)$ and $p(y) = \int p(x,y) \mathrm{d}\lambda(x)$ as the marginals. We use $\supp(\cdot)$ to denote the support of a distribution, and $f^*$ to denote the conjugate of function $f$. Every norm presented is Euclidean. We use $x^g := g(x)$ as the shorthand notation for the feature embedding, with $x$ a raw sample. The notation $p_{d}$ stands for the data distribution, and $\squares := p_d\otimes p_d$ means its self product. We denote $p_+$ as the distribution of \emph{positive pairs}, \ie, two samples with similar feature embeddings. The symbol $\circ$ denotes function composition.

\subsection{Additional $f$-divergences}
\label{app:f-div}

We expand Table \ref{tbl:choices_f_div} and give more examples of $f$-divergences in Table \ref{tbl:choices_f_div_app}. As we will see in the proof of Theorem \ref{thm:uniformity}, Table \ref{tbl:choices_f_div} gives a special class of $f$-divergences that guarantees uniformity. A detailed description of $f$-divergences can be found in e.g.~\cite{sason2016f}.

\begin{table*}
\setlength\tabcolsep{8pt}
\caption{A summary of common $f$-divergences. KL: Kullback--Leibler; JS: Jensen--Shannon; SH: Squared Hellinger. For JS, we define $\varphi(u) = - (u+1) \log \frac{1+u}{2}+u \log u$. For Pearson $\chi^2$, we take $f^*(t) = -1$ if $t\leq -2$. For Jeffrey, $\widehat{W} = W + W^{-1}$ and $W(\cdot)$ is the Lambert-$W$ product log function. The Tsallis-$\alpha$ divergence is defined in \protect\cite{tsallis1988possible} and we have $\alpha > 1$ for $f$-divergences. We ignore constant addition $-1/(\alpha - 1)$ because it does not change the optimization problem. The Vincze--Le Cam divergence can be found in \protect\cite{le2012asymptotic} which is closely related to $\chi^2$ and Hellinger divergences. For the Vincze--Le Cam divergence we require $-3 < t < 1$ and $f^*(t) = -1$ if $t\leq -3$. }
\label{tbl:choices_f_div_app}
\begin{center}
	\begin{tabular}{lllll}
	\toprule
	\centering
	{\bf Divergence} & $f(u)$  & $f^*(t)$ & $f'(u)$ & $f^*\circ f'(u)$ \\
	\midrule \midrule 
	KL
	& $u \log u$
	& $\exp(t-1)$
	& $\log u + 1$
	& $u$
	\\
	Reverse KL
	& $- \log u$
	& $-1-\log (-t)$
	& $-1/u$
	& $\log u - 1$
	\\
	JS
	& $\varphi(u)$
	& $- \log(2-e^t)$
	& $\log 2 + \log \frac{u}{1+u}$
	& $-\log 2 + \log( 1 + u)$
	\\
	Pearson $\chi^2$ 
	& $(u-1)^2$ 
	& $t^2/4 + t$
	& $2(u-1)$
	& $u^2 - 1$
	\\
	SH 
	& $(\sqrt{u} - 1)^2$
	& $\frac{t}{1 - t}$
	& $1 - u^{-1/2}$
	& $u^{1/2} - 1$
	\\
	Neyman $\chi^2$ 
	& $\frac{(1 - u)^2}{u}$
	& $2 - 2\sqrt{1-t}$ 
	& $1 - u^{-2}$
	& $2 - 2 u^{-1}$
	\\
	Jeffrey
	& $(u - 1) \log u$
	& $\widehat{W}(e^{1-t}) + t - 2$
	& $1 - u^{-1} + \log u$
	& $\widehat{W}(e^{1/u}/u) + \log u - \frac{1+u}{u}$
	\\
	Tsallis-$\alpha$
	& $\tfrac{u^\alpha}{\alpha - 1}$
	& $(\tfrac{\alpha - 1}{\alpha}t)^{\tfrac{\alpha}{\alpha-1}}$
	& $\frac{\alpha}{\alpha - 1} u^{\alpha - 1}$
	& $u^\alpha$
	\\
	Vincze--Le Cam
	& $\frac{(u-1)^2}{u + 1}$ 
	& $4 - t - 4\sqrt{1-t}$
	& $\frac{(u-1)(u+3)}{(u+1)^2}$
	& $3 - \frac{4}{u+1}$\\
	\bottomrule
	\end{tabular}
\end{center}
\end{table*}

\subsection{Weighting parameters}\label{sec:weighting}

In Algorithm \ref{alg:f-MICL} we added a weighting parameter $\alpha$ to balance the alignment and uniformity. We prove that even after adding this parameter we are still maximizing the $f$-mutual information, although with respect to a different $f$. 

\begin{restatable}[\textbf{weighting parameter}]{prop}{weight}\label{prop:scaling}
Given $\alpha > 0$ and a closed convex function $f: \Rb_+ \to \Rb$ such that $f(1) = 0$, define $f_\alpha : \alpha \,\dom f \to \Rb$ with $$f_\alpha(x) = \alpha f\left(\frac{x}{\alpha}\right) - \alpha f\left(\frac{1}{\alpha}\right)$$ for any $x\in \dom f$. Then $I_{f_\alpha}$ is still a valid $f$-mutual information (see Definition \ref{def:fMI}). Besides, by replacing $f$ with $f_\alpha$ in \cref{eq:objective_gaussian} we have the following optimization problem:
$$\sup_{g\in \Gc} \mathbb{E}_{(x, y) \sim \ppos}\left[f'\left(\frac{G_\sigma(\|x^g - y^g\|^2)}{\alpha}\right)\right] - \alpha \mathbb{E}_{(x, y)\sim \squares}\left[f^* \circ f'\left(\frac{G_\sigma(\|x^g - y^g\|^2)}{\alpha}\right)\right],
$$
where $G_\sigma(\|x^g - y^g\|^2) = \mu \exp\left(-\frac{\|x^g - y^g\|^2}{2\sigma^2}\right)$ is the Gaussian kernel. 
\end{restatable}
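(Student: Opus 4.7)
The plan is to verify the two assertions in sequence: first that $f_\alpha$ is a legitimate generator for an $f$-divergence, and second that substituting $f_\alpha$ into the $f$-MICL objective (with $f$-Gaussian similarity) produces precisely the weighted objective displayed in the proposition, up to an additive constant that does not affect the optimization over $g$.

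For the first claim, I would check the three properties required by Definition \ref{def:fMI}. Convexity of $f_\alpha$ follows because $x \mapsto f(x/\alpha)$ is the composition of the convex function $f$ with the affine map $x \mapsto x/\alpha$ and is therefore convex, multiplying by $\alpha > 0$ preserves convexity, and subtracting the constant $\alpha f(1/\alpha)$ leaves convexity unchanged. Closedness (lower semicontinuity) is preserved under positive scaling and affine precomposition, and $f_\alpha(1) = \alpha f(1/\alpha) - \alpha f(1/\alpha) = 0$ by construction. So $I_{f_\alpha}$ is a bona fide $f$-mutual information.

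For the second claim, the key is to compute $f'_\alpha$ and $f^*_\alpha$ explicitly. By the chain rule,
\begin{equation*}
f'_\alpha(x) \;=\; \alpha \cdot \frac{1}{\alpha} \, f'(x/\alpha) \;=\; f'(x/\alpha).
\end{equation*}
For the conjugate, I would apply the change of variables $x = \alpha u$ in the supremum,
\begin{equation*}
f^*_\alpha(t) = \sup_{x} \bigl( x t - \alpha f(x/\alpha) + \alpha f(1/\alpha) \bigr) = \alpha \sup_{u} \bigl( u t - f(u) \bigr) + \alpha f(1/\alpha) = \alpha f^*(t) + \alpha f(1/\alpha).
\end{equation*}
Composing yields $f^*_\alpha \circ f'_\alpha(x) = \alpha \,(f^* \circ f')(x/\alpha) + \alpha f(1/\alpha)$.

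With these identities in hand, I would substitute $f_\alpha$ for $f$ in \eqref{eq:objective_gaussian}, using the $f$-Gaussian similarity $s_{f_\alpha}(x^g, y^g) = f'_\alpha \circ G_\sigma(\|x^g - y^g\|^2) = f'\bigl(G_\sigma(\|x^g - y^g\|^2)/\alpha\bigr)$. The first term becomes $\mathbb{E}_{p_+} f'(G_\sigma/\alpha)$ directly, and the second term becomes $\alpha \, \mathbb{E}_{\squares} (f^* \circ f')(G_\sigma/\alpha) + \alpha f(1/\alpha)$. Since the last summand is a constant independent of the encoder $g$, it drops out of the $\sup_g$, yielding exactly the displayed optimization problem. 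The ``main obstacle'' is really only bookkeeping: one must be careful to track the domain $\alpha \cdot \dom f$ (on which $f'_\alpha$ and the change-of-variables are well-defined) and to justify dropping the additive constant $\alpha f(1/\alpha)$ as irrelevant to the maximizer. No deeper ideas are required.
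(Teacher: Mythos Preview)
Your proposal is correct and follows essentially the same approach as the paper: verify that $f_\alpha$ satisfies the requirements of Definition~\ref{def:fMI}, compute $f_\alpha'(x) = f'(x/\alpha)$ and $f_\alpha^*(t) = \alpha f^*(t) + \alpha f(1/\alpha)$ via the change of variables $x = \alpha u$, and substitute into \eqref{eq:objective_gaussian}. You are slightly more explicit than the paper about why the additive constant $\alpha f(1/\alpha)$ can be dropped from the supremum, but otherwise the arguments coincide.
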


Note that $\alpha \,\dom f$ means the scalar multiplication of a set which is applied element-wisely. According to Definition \ref{def:fMI}, $f_\alpha$ is also a valid $f$-divergence. This proposition tells us that rescaling the second term with factor $\alpha$ is equivalent to changing the function $f$ to another convex function $f_\alpha$. The transformation from $f$ to $\alpha f\left(\frac{x}{\alpha}\right)$ is also known as right scalar multiplication \citep{urruty1993convex}. Let us now move on to our proof:

\begin{proof}
By definition, we know that $f_\alpha$ is convex and closed with $f_\alpha(1) = 0$, and thus $I_{f_\alpha}$ is a valid $f$-mutual information according to Definition \ref{def:fMI}. Moreover, we have $f_\alpha'(x) = f'(\frac{x}{\alpha})$ for any $x\in \alpha \, \dom f$ and
\begin{align}
f_\alpha^*(t) &= \sup_{x \in \dom f_\alpha} x t - f_\alpha(x) \nonumber \\
&= \sup_{x \in \alpha \dom f} x t - \alpha f\left(\frac{x}{\alpha}\right) + \alpha f\left(\frac{1}{\alpha}\right) \nonumber \\
&= \sup_{\frac{x}{\alpha} \in \dom f} \frac{x}{\alpha}\cdot (\alpha t) - \alpha f\left(\frac{x}{\alpha}\right) + \alpha f\left(\frac{1}{\alpha}\right) \nonumber \\
&= \alpha \sup_{\frac{x}{\alpha} \in \dom f} \left(\frac{x}{\alpha}\cdot t -  f\left(\frac{x}{\alpha}\right)\right) + \alpha f\left(\frac{1}{\alpha}\right) \nonumber \\
& = \alpha f^*(t) + \alpha f\left(\frac{1}{\alpha}\right),
\end{align}
where in the last line we used the definition of $f^*(t)$. Plugging $f_\alpha'$ and $f_\alpha^*$ into \cref{eq:objective_gaussian} yields the desired result. 
\end{proof}

\section{Proofs}\label{app:proofs}




\optimalsim*

\begin{proof}
From Definition \ref{def:fMI}, we are computing the following supremum:
\begin{align}
\sup_{g, s} \int \left( \frac{p_g(x^g, y^g)}{p_g(x^g)p_g(y^g)}s(x^g, y^g) - f^* \circ s(x^g, y^g)\right) d\pmargin^g \otimes \pmargin^g.
\end{align}
Suppose $s$ is unconstrained and we fix $g$. The optimal solution should satisfy:
\begin{align}
\frac{p_g(x^g, y^g)}{p_g(x^g)p_g(y^g)} \in (\partial f^*)(s_\star(x^g, y^g)),
\end{align}
almost surely for $(x, y) \sim \pmargin \otimes \pmargin$. From (3.11) of \cite{rockafellar1966characterization} this is equivalent to:
\begin{align}
s_\star(x^g, y^g) \in \partial f \left( \frac{p_g(x^g, y^g)}{p_g(x^g)p_g(y^g)}\right).
\end{align}If $f$ is differentiable, then for any $u\in \dom f$, $\partial f(u) = \{f'(u)\}$ is a singleton.
\end{proof}

\uniformDis*

Note that we say the samples are  ``distributed uniformly'' if the feature vectors form a regular simplex (see Figure \ref{fig:tetrahedron}), and thus the distances between all sample pairs are the same.

\begin{proof}
%
From the definition of $h$ it is clear that $h$ is decreasing since $f^*$ and $f'$ are both monotonically increasing white $G_\sigma$ is decreasing. Using $h$ we rewrite the second term of \cref{eq:objective_sample} as 
\begin{align}
\min_{x_1^g, \ldots, x_N^g \in \Sb^{d-1}} ~ \sum_{i, j} h(\|x_i^g - x_j^g \|^2).
\end{align}
When $N \in [2, d+1]$, there exists a neat characterization of the minimizers, see e.g. \cite{borodachov2019discrete}. We include the proof below for completeness. 

Apply Jensen's inequality, we have:
\begin{align}
\frac{1}{N^2}\sum_{i, j} h(\|x_i - x_j\|^2) &\geq h\left( \frac{1}{N^2}\sum_{i, j} \|x_i - x_j\|^2\right) \nonumber \\
& = h\left(\frac{1}{N^2}\sum_{i, j} \|x_i - x_j\|^2\right)\nonumber \\
&= h\left(\frac{1}{N^2}\sum_{i,j} (2 - 2x_i \cdot x_j)\right) \nonumber \\
& = h\left(2\left(1 -\left\|\frac1N\sum_{i=1}^N x_i\right\|^2\right)\right) \nonumber \\
& \geq h\left(2\right),
\end{align}
where in the first line we used Jensen's inequality; in the third line we used $\|x_i\| = \|x_j\| = 1$ for any $i, j\in [N]$; in the last line we note that $\|\sum_{i=1}^N x_i\| \geq 0$ and $h$ is a decreasing function. When $h$ is strictly convex and decreasing, it is in fact strictly decreasing, and hence the two inequalities above can be attained iff 
\begin{align}
\bar x := \frac1N\sum_i x_i = {\bf 0}, ~~\mbox{ and } \|x_i - x_j\|^2 \equiv c \mbox{ for all } i \ne j,
\end{align}
namely that $\{x_1, \ldots, x_N\}$ form a regular simplex with its center at the origin. We remark that when $h$ is merely convex, points forming a centered regular simplex may form a strict subset of the minimizers.

To see the necessity of $N \leq d+1$, let us note that 
\begin{align}
x_i^\top x_j = \begin{cases} 
1, & i = j \\
-\frac{1}{N-1} , & i\ne j
\end{cases}
,
\end{align}
since 
\begin{align}
\begin{split}
    &\sum_{ij} \|x_i-x_j\|^2 = 2N^2 = N(N-1) c \implies \\
    &c = \frac{2N}{N-1} = 2 + \frac{2}{N-1}.
\end{split}
\end{align}
Performing simple Gaussian elimination we note that the matrix $X^\top X$ has rank $N-1$ where $X = [x_1, \ldots, x_N] \in \R^{d\times N}$. Therefore, we must have $N-1 \leq d$.





Lastly, we need to show when $h$ is a (strictly) convex function, which may not always be true depending on the $f$-divergences. We give the following characterization (we ignore the constants $\mu$ and $2\sigma^2$ in Assumption \ref{assmp:joint_vMF} as they do not affect convexity):
\begin{itemize}
\item $h$ strictly convex: $h_{\rm KL}(t) = e^{-t}$, $h_{\rm JS}(t) = \log(1 + e^{-t}) - \log 2$, $h_{\rm Pearson}(t) = e^{-2t} - 1$, $h_{\rm SH}(t) = e^{-t/2} - 1$, $h_{\rm Tsallis}(t) = e^{-\alpha t}$, $h_{\rm VLC} = 3 - \frac{4}{1 + e^{-t}}$;
\item $h$ convex but not strictly convex: $h_{\rm RKL}(t) = -t - 1$ (RKL stands for Reversed Kullback--Leibler, see Appendix \ref{app:f-div});
\item $h$ concave: $h_{\rm Neyman}(t) = 2 - 2 e^t$ (Neyman stands for Neyman $\chi^2$, see Appendix \ref{app:f-div}). 
\end{itemize}
Only for the last case we do not have the guarantee that the minimizing configurations could form a regular simplex. For RKL, in fact, any configuration that centers at the origin suffices since $h$ is a linear function.  
\end{proof}

\section{Estimation of $f$-MICL Objective}
\begin{figure}[t]
  \begin{center}
    \includegraphics[width=0.23\textwidth]{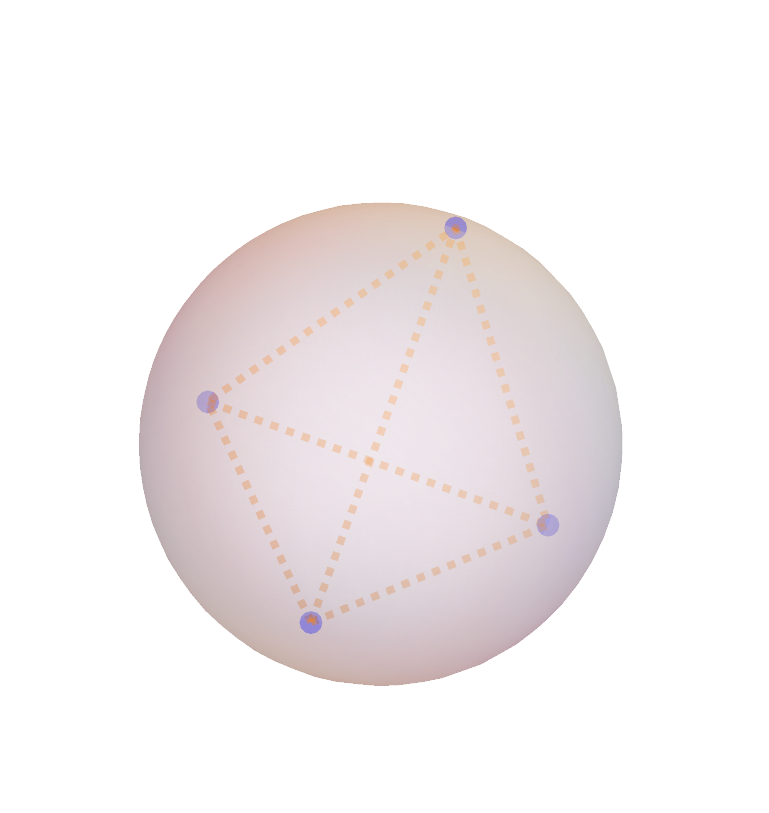}
  \end{center}
  \caption{A regular simplex on a hypersphere.}
  \label{fig:tetrahedron}
\end{figure}
In \S~\ref{subsec:imp} we have provided the empirical estimation of our objective in \cref{eq:objective_sample}. However, it remains a question whether our estimation of $f$-mutual information is consistent. 
In this part we derive an upper bound for the estimation error from statistical learning theory.

We denote our $f$-MICL objective as $i_f(X;Y)$ (\cref{eq:objective_gaussian}) and its empirical estimation as $\widehat{i}_f(X;Y)$ (\cref{eq:objective_sample}). After fixing the similarity function , we have $T(x, y) = s_f(x^g, y^g) = f'\circ G_\sigma(\|x^g - y^g\|^2).$

\begin{restatable}[\textbf{estimation error}]{thm}{Estimation}\label{thm:simclr_moco_negative}
Suppose that the function $T$ is taken from a function class $\Tc$ and define
$\Tc_x$ as the function class of $T(x, \cdot)$ given some $x\in \supp(\pmargin)$. Denote $\Rf_N^P$ to be the Rademacher complexity w.r.t. the distribution $P$ with $N$ i.i.d.~drawn samples.
Then for any $T\in \Tc$, the estimation error $|i_f(X; Y) - \widehat{i}_f(X; Y)|$ is upper bounded with probability at least $1 - \delta$:
\begin{align}\label{eq:esterror}
\textstyle
&2 \Rf_N^{\ppos }( \Tc) + 2 \mu \Bigg( \underset{x\sim \pmargin }{\Eb} \Rf_N^{\pmargin} ( \Tc_x) + \frac{1}{N} \sum_{i=1}^N \Rf_{N - 1}^{\pmargin}( \Tc_{x_j}) \Bigg) 
+ (r_T + 2 r_f)  \sqrt{\frac{\log 6/\delta}{2(N - 1)}},
\end{align}
with the constants
$
r_T  = f'(\mu) -  f'(\mu e^{-2/\sigma^2})$ and $$r_f = f^*  \circ  f'(\mu)  -  f^*  \circ  f'(\mu e^{-2/\sigma^2}).
$$
\end{restatable}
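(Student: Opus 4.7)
The plan is to decompose the estimation error via the triangle inequality into two pieces: a ``positive'' part $P := |\mathbb{E}_{(x,y)\sim\ppos}T(x,y) - \frac{1}{N}\sum_i T(x_i,y_i)|$ where the $(x_i,y_i)$ are genuinely i.i.d.\ from $\ppos$, and a ``negative'' part $Q := |\mathbb{E}_{(x,y)\sim\squares}f^*\circ T(x,y) - \frac{1}{N(N-1)}\sum_{i\ne j}f^*\circ T(x_i,x_j)|$ whose summands are \emph{not} i.i.d.\ (the pairs $(x_1,x_2)$ and $(x_1,x_3)$ share $x_1$). I would then apply the standard Rademacher complexity generalization bound (Lemma of Koltchinskii--Panchenko type) plus McDiarmid/Hoeffding to each piece, finally merging the three high-probability events by a union bound --- this is where the $\log(6/\delta)$ factor in the statement originates.

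For $P$, since the samples are i.i.d., the classical bound yields $P \leq 2\Rf_N^{\ppos}(\Tc) + r_T\sqrt{\log(2/\delta)/(2N)}$, where $r_T$ is the range of $T$. To compute $r_T$, I would use that $x^g,y^g\in\Sb^{d-1}$ implies $\|x^g-y^g\|^2 \in [0,4]$, so $G_\sigma$ takes values in $[\mu e^{-2/\sigma^2},\mu]$; combined with monotonicity of $f'$ this gives exactly $r_T = f'(\mu) - f'(\mu e^{-2/\sigma^2})$. For $Q$, the trick is to insert the intermediate quantity $\mathbb{E}_{x\sim\pmargin}\tfrac{1}{N}\sum_{j=1}^N f^*\circ T(x,x_j)$ and split $Q$ by a second triangle inequality into $Q_1 + Q_2$.

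The term $Q_1$ compares two expectations over the outer $x$, where conditionally on $x$ the samples $x_j$ are i.i.d.\ from $\pmargin$; so a Rademacher bound applies inside the outer expectation, yielding $Q_1 \leq 2L_f\mathbb{E}_{x\sim\pmargin}\Rf_N^{\pmargin}(\Tc_x) + r_f\sqrt{\log(2/\delta)/(2N)}$ after invoking Talagrand's contraction lemma to strip $f^*$ at the price of its Lipschitz constant $L_f$. For $Q_2$, for each fixed index $j$ the ``outer $x$'' role is played by $x_j$ while the remaining $N-1$ samples $\{x_i\}_{i\ne j}$ are i.i.d., giving a parallel Rademacher bound of the same form with $\Rf_{N-1}^{\pmargin}(\Tc_{x_j})$; averaging over $j$ yields the $\frac{1}{N}\sum_j$ term in \cref{eq:esterror}. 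To obtain the constant $\mu$ in front of the Rademacher terms, I would use the Fenchel identity $(f^*)'\circ f' = \mathtt{id}$ together with Assumption~\ref{assmp:joint_vMF} to get $L_f \leq \sup G_\sigma = \mu$, and similarly compute $r_f = f^*\circ f'(\mu) - f^*\circ f'(\mu e^{-2/\sigma^2})$ from monotonicity of $f^*\circ f'$ (which is increasing since $f$ is convex).

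The main obstacle is the non-i.i.d.\ structure of the negative pairs: a naive direct application of McDiarmid to $Q$ would fail because changing a single $x_k$ perturbs $O(N)$ summands at once. The two-step decomposition $Q \leq Q_1 + Q_2$ is what circumvents this, trading a tighter single bound for two i.i.d.-structured bounds that each behave like $1/\sqrt{N}$. A secondary technical point is ensuring Talagrand's contraction is applied conditionally on the outer $x$ (so that the function class being contracted is $\{f^*\circ T(x,\cdot) : T\in \Tc\}$ with a fixed $x$), which is exactly what motivates the definition of $\Tc_x$ in the statement. Everything else is bookkeeping: the three high-probability events (one for $P$, two for $Q_1$ and $Q_2$) are combined by a union bound with $\delta/3$ each, producing $\log(6/\delta)$ as advertised, and the weakest sample-size factor $1/(N-1)$ is kept to yield the final form.
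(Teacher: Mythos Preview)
Your proposal is correct and follows essentially the same approach as the paper: the same triangle-inequality split into a positive i.i.d.\ piece and a non-i.i.d.\ negative piece, the same insertion of the intermediate term $\mathbb{E}_{x\sim\pmargin}\tfrac{1}{N}\sum_j f^*\circ T(x,x_j)$ to decompose $Q$ into two conditionally-i.i.d.\ pieces, the same use of Talagrand's contraction with $L_f\le\mu$ via $(f^*)'\circ f'=\mathtt{id}$, the same range computations from $\|x^g-y^g\|^2\in[0,4]$, and the same three-event union bound producing $\log(6/\delta)$.
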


Here the constant $\mu$ is from our Gaussian kernel in Assumption \ref{assmp:joint_vMF}. Rademacher complexity evaluates the richness of a class of real-valued functions regarding a probability distribution, and its formal definition  can be found in \cite{koltchinskii2001rademacher}.

\textbf{Non-i.i.d.~proof.} Our conclusion is theoretically non-trivial since our sample pairs are \emph{non-i.i.d.}: although the individual samples are assumed to be i.i.d., the negative pairs are not independently drawn (\eg, $(x_1, x_2)$ and $(x_1, x_3)$), which makes the derivation challenging. 

Note that the function class $\Tc$ depends on the class of the feature encoder $g$ and the $f$-divergence. Our estimation error eq.~(\ref{eq:esterror}) is composed of three parts:
\begin{itemize}
\item the Rademacher complexity of the function class $\Tc$. In general, if $\Tc$ is richer then its Rademacher complexity is also larger.
\item  the expected Rademacher complexity of the one-side function class $\Tc_x$ and its empirical estimation; 
\item an error term that decreases with more samples.
\end{itemize}
Since the encoders are usually built with neural networks,
we can use the existing theory \citep{bartlett2019nearly} to give more detailed bounds for the Rademacher complexities of $\Tc$. Specifically, if the Vapnik--Chervonenkis (VC) dimension of $\Tc$ is finite, then our estimation error in eq.~(\ref{eq:esterror}) goes to zero as $N\to \infty$ \citep{mohri2018foundations}.

\textbf{Approximation and estimation tradeoff.}\; In order to minimize the estimation error in \cref{eq:esterror}, we should choose a simpler function class $\Tc$ to reduce the Rademacher complexities. However, $\Tc$ should also be rich enough so that \cref{eq:condition_T} can be satisfied, since our objective $i_f(X;Y)$ should approximate the $f$-mutual information $I_f(X;Y)$
if we choose the optimal $T$. Therefore, there is a natural tradeoff between approximation and estimation errors when we change the complexity of $\Tc$.

\section{Additional experimental results}\label{app:add_exp}

We present additional experiment details in this appendix, to further support our experiments in the main paper.  

\subsection{Implementation details}
\label{sec:imp}
In this paper, we follow the implementations in SimCLR (\url{https://github.com/sthalles/SimCLR}) and MoCo v3 (\url{https://github.com/facebookresearch/moco-v3}). For vision tasks, we use ResNet \citep{he2016deep} and ViT-S \citep{vit} as the feature encoder, and we adopt the similar procedure of SimCLR/MoCo for sampling. 
For the language dataset, we follow the exact experimental setting of \cite{gao2021simcse} and only change the objective. Our experimental settings are detailed below:

\begin{itemize}
    \item Hardware and package: We train on a GPU cluster with \texttt{NVIDIA T4} and \texttt{P100}. The platform we use is  \texttt{pytorch}. Specifically, the pairwise summation can be easily implemented using $\mathtt{torch.nn.functional.pdist}$ from \texttt{pytorch}.
    \item Datasets: the datasets we consider include CIFAR-10, 
     STL-10 \citep{coates2011analysis}, TinyImageNet \citep{chrabaszcz2017downsampled}, ImageNet \citep{deng2009imagenet} and English Wikipedia \citep{gao2021simcse}. 
    \item Augmentation method: For each sample in a dataset we create a sample pair, a.k.a.~positive pair, using two different augmentation functions. For image samples, we choose the augmentation functions to be the standard ones in contrastive learning, e.g., in \cite{ChenKNH20} and \cite{he2020momentum}. The augmentation is a composition of random flipping, cropping, color jittering and gray scaling. For text samples,  following the augmentation method of \cite{gao2021simcse} we use dropout masks. 
    \item Neural architecture: For CIFAR-10   we use ResNet-18 \citep{he2016deep};  for STL-10, TinyImageNet  we use ResNet-50 \citep{he2016deep}; for ImageNet we use ViT-S \citep{vit}; for the Wikipedia dataset we use BERT$_{\tt base}$ \citep{devlin2019bert}. 
    \item Batch size and embedding dimension: for experiments in CIFAR-10 we choose batch size 512; for STL-10 we choose batch size 64 to accommodate one GPU training; for TinyImageNet, we choose batch size 256; for ImageNet, we choose batch size 1024. For all the vision datasets, we choose the embedding dimension to be 512. Regarding the language dataset, the batch size is 64 with the feature dimension 768. In all of these cases, our assumption $N\leq d + 1$ in Theorem \ref{thm:uniformity} is satisfied.
    \item Hyperparameters: in all our experiments we fix the constant factor $\mu = 1$. We find that in practice the weight parameter $\alpha$ often needs to be large (\eg, in the Wikipedia dataset), which requires moderate tuning. 
    \item Optimizer and learning rate scheduler: For smaller vision tasks, we use SGD with momentum for optimization and the cosine learning rate scheduler \citep{loshchilov2016sgdr}. For the ImageNet task and natural language task, we use Adam with weight decay \citep{loshchilov2018decoupled} and the linear decay scheduler.  
    \item Evaluation metric: for vision tasks, we use $k$-nearest-neighbor ($k$-NN) (only small datasets) and linear evaluation to evaluate the performance, based on the learned embeddings. For the NLP task, we use the Spearman's correlation to evaluate the averaged semantic textual similarity score \citep{gao2021simcse}. 
    \item Baseline methods: for the four baseline methods, we follow the implementations in:
    \begin{itemize}
        \item MoCo: \url{https://github.com/facebookresearch/moco};
        \item SimCLR: \url{https://github.com/sthalles/SimCLR};
        \item Uniformity: \url{https://github.com/SsnL/align_uniform};
        \item MoCo v3:
        \url{https://github.com/facebookresearch/moco-v3}
 \end{itemize}
    For fair comparison we use the experimental settings in Table \ref{tab:setup} for all the baseline methods, which might differ from the original settings. 
\end{itemize}

Table \ref{tab:setup} gives common choices of hyperparameters for different datasets. Note that we may need to further finetune $\alpha$ and $\sigma$ for different $f$-divergences. See our supplementary code for more details.

\begin{table*}[t]
\setlength\tabcolsep{10pt}
\centering
\caption{Detailed experimental settings. {\tt arch}: the neural network architecture used. $N$: batch size; $d$: the dimension of the feature representation; {\tt lr}: learning rate; $\mu$: the constant factor in $\mu$; $1/(2\sigma^2)$ and $\alpha$ follow from Algorithm \ref{alg:f-MICL}; {\tt epoch}: the number of epochs we run; $k$: the number of nearest neighbors in $k$-NN evaluation. }
\label{tab:setup}
\begin{tabular}{cc@{\hskip1.5ex}cccccccc}
\toprule
Dataset & {\tt arch} & $N$ & $d$ & {\tt lr} & $\mu$ & $(2\sigma^2)^{-1}$ & $\alpha$ & {\tt epoch} & $k$\\ \midrule\midrule
CIFAR-10 & ResNet-18 & 512 & 512 & 0.1 & 1 & 1 & 40 & 800 &200  \\
STL-10 & ResNet-50 & 64 & 512 & 0.1 & 1 & 1 & 40 & 800 & 200 \\
TinyImageNet & ResNet-50 & 256 & 512 & 0.1 & 1 & 1 & 40 & 800 & 200 \\
ImageNet & ViT-S & 1024 & 512 & 0.1 & 1 & 1 & 40 & 1000 & {\tt n/a} \\
Wikipedia & BERT$_{\tt base}$ & 64 & 768 & {\tt 3e-5} & 1 & 20 & 409600 & 1 & {\tt n/a}\\
\bottomrule
\end{tabular}
\end{table*}


\begin{table}[b]
\centering
\caption{Ablation study on weighting parameter $\alpha$ for KL and JS divergences on CIFAR-10. We compare test accuracies (\%) for different choices of $\alpha$ using $k$-NN evaluation.}
\label{tab:alpha1}
\setlength\tabcolsep{16pt}
\begin{tabular}{cccccccc}
\toprule
$\alpha$ & 0.1 & 1 & 10 & 20 & 30 & 40 & 50\\ \midrule
KL &  13.16 & 77.60 & 83.53 & 83.77 & 81.39 & \textbf{84.19} &  82.77  \\
JS & 8.84 & 73.31 & 81.39 & 83.21 & 83.49 & \textbf{84.06} & 82.61 \\
\bottomrule
\end{tabular}
\end{table}

\begin{table}[bt]
\centering
\caption{Ablation study on weighting parameter $\alpha$ for KL and Pearson $\chi^2$ divergences on Wikipedia. We compare the semantic textual similarity (STS) via the Spearman's correlation for different choices of weighting parameter $\alpha$.}
\label{tab:alpha2}
\setlength\tabcolsep{12pt}
\begin{tabular}{ccccccccc}
\toprule
$\alpha$ & 1 & 10 & $10^2$ & $10^3$ & $10^4$ & $10^5$ & 409600 & $10^6$\\ \midrule
KL & 67.52 & 70.47 & 72.43 & 75.12 & 76.90 & 77.78 & \textbf{78.02} & 77.78   \\
Pearson & 64.58 & 67.78 & 71.58 & 74.03 & 74.95 & 74.40 & \textbf{77.59} & 76.47\\
\bottomrule
\end{tabular}
\end{table}

\subsection{Additional ablation study on weighting parameter}

We provide additional ablation study on the weighting parameter $\alpha$. We perform experiments using  a vision dataset (CIFAR-10) and a language dataset (Wikipedia). For CIFAR-10, we vary $\alpha$ from 0.1 to 50 for KL and JS divergences and run for 200 epochs. \teal{We perform the same experiments on KL when $\alpha=1$ for 800 epochs and observed an accuracy of 83.58\% (lower than SimCLR). This observation further provides empirical evidence that KL-MICL is  different from InfoNCE and needs special tuning on $\alpha$ to perform well.}

Table \ref{tab:alpha1} justifies our choice of $\alpha$ in Table \ref{tab:setup}, where the downstream test accuracy indicates the optimal performance when choosing $\alpha=40$. For the Wikipedia dataset, we observe that a much bigger $\alpha$ is desirable for maximum performance.  We vary $\alpha$ from 1 to $10^6$ for KL and Pearson $\chi^2$ divergences and run for 1 epoch, as there is a large number of samples ($10^6$) in the language dataset. Table \ref{tab:alpha2} justifies our choice of $\alpha$ in Table \ref{tab:setup}, where the best performance is reached at $\alpha=409600$. Such an $\alpha$ is found by starting from $\alpha=100$ and doubling iteratively.

\subsection{Additional experiments}\label{sec:add_exp}

Our final experiments show that $f$-MICL is stable in terms of training and the variation of performance is well controlled. 

\paragraph{Training stability} We depict the training loss curves of different divergences on CIFAR-10 in Figure \ref{fig:curve}. This figure shows that our methods exhibit stable training dynamics with fast convergence.

\begin{figure}
\centering
\includegraphics[width=0.4\textwidth]{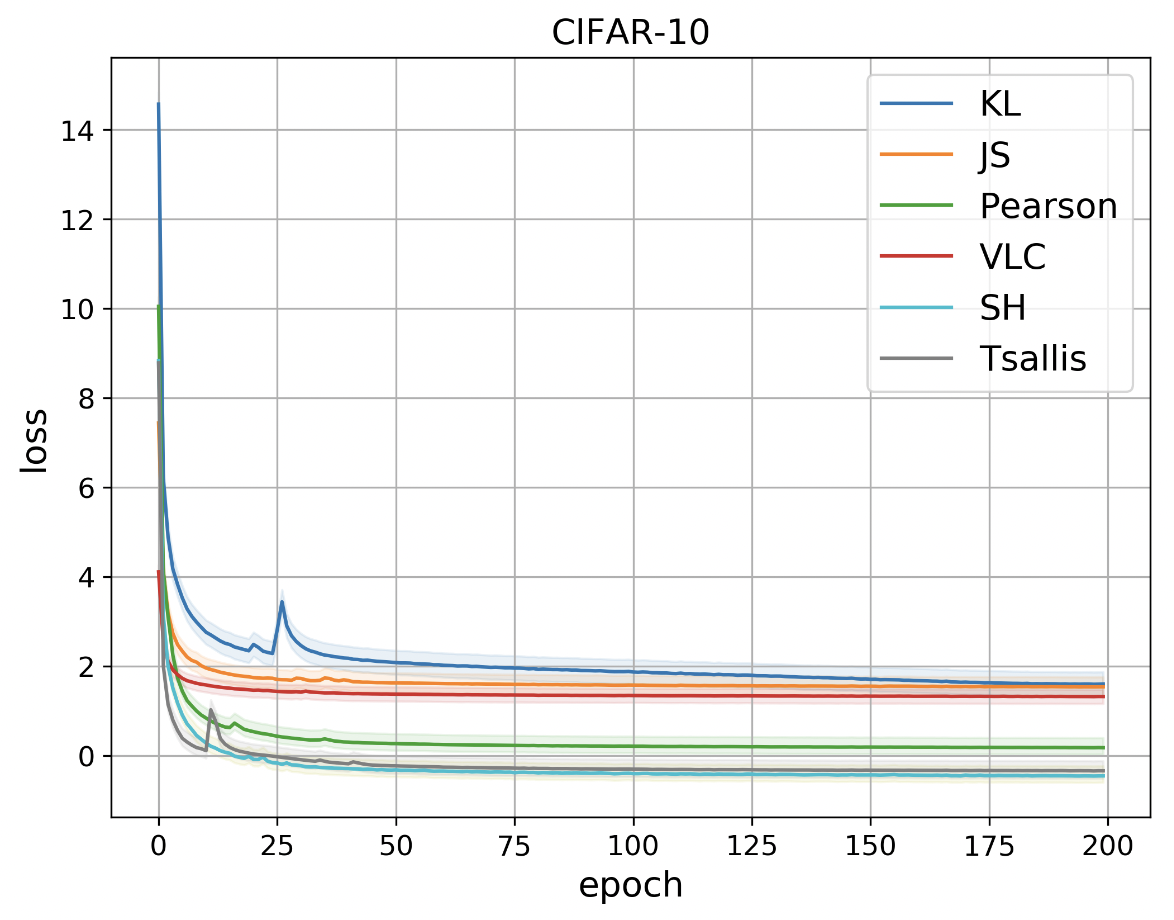}
    \caption{The training loss curves of various $f$-divergences on CIFAR-10 with $200$ epochs.}
    \label{fig:curve}
\end{figure}

\paragraph{$k$-NN evaluation and additional $f$-divergences}
We show more detailed results of Table \ref{tab:accs} in Table \ref{tab:accs_full}, including experiments using $k$-nearest neighbour ($k$-NN) evaluation. Additionally, we have added experiments on other $f$-divergences such as Squared Hellinger and Tsallis-$\alpha$ divergences.



\begin{figure*}
    \centering\includegraphics[width=0.9\textwidth]{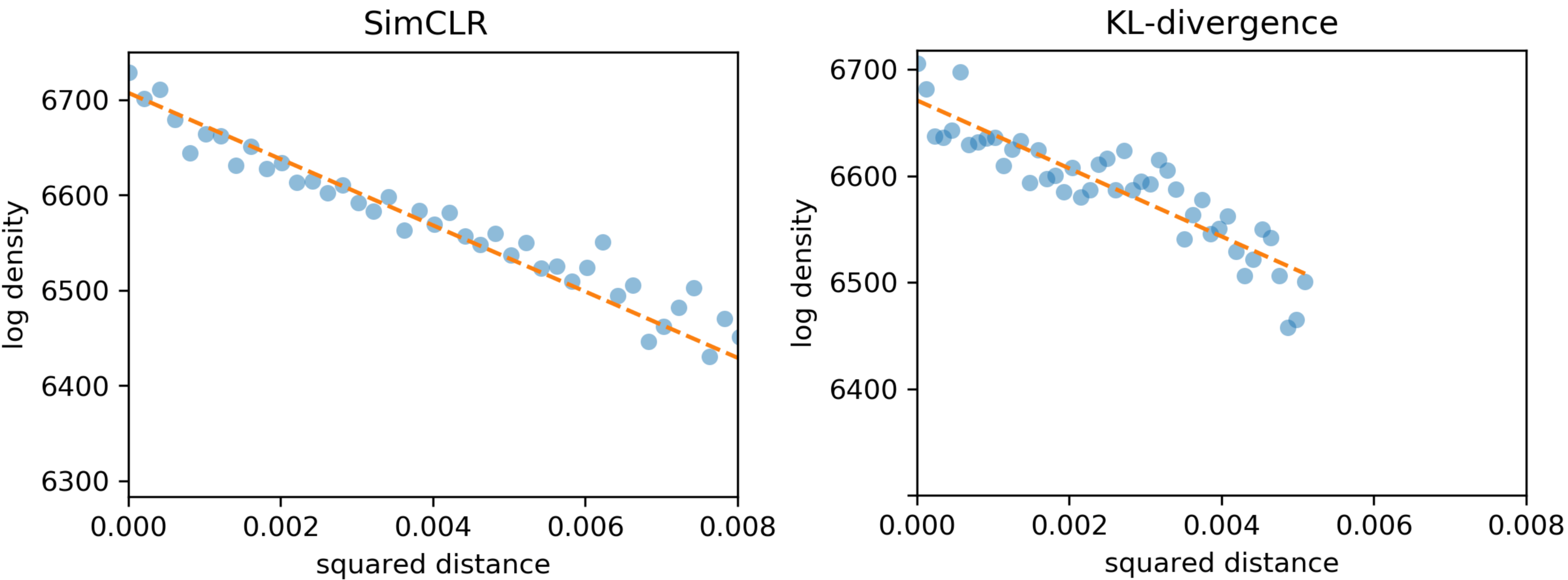}
    \caption{Experiment for verifying Assumption \ref{assmp:joint_vMF}. We draw the relation between the squared distances $\|x^g - y^g\|^2$ and the averaged $\log p_g$ with RealNVP. The features are learned by different algorithms trained on CIFAR-10. ({\bf left}) SimCLR; {\bf (right)} $f$-MICL with the KL divergence.
    }
    \label{fig:gaussian2}
\end{figure*}

\paragraph{Verification of Assumption \ref{assmp:joint_vMF}}
Throughout our paper we made an assumption (Assumption \ref{assmp:joint_vMF}) that the joint feature distribution is a Gaussian kernel. However, is it a valid assumption? In this experiment, we try to show some empirical evidence that this assumption 
approximately holds in practice. Recall that Assumption \ref{assmp:joint_vMF} says that the joint feature distribution of positive pairs is:
\begin{align}\label{eq:gaussian_assumption_app}
p_g(x^g, y^g) \propto \exp\left(-\frac{\|x^g - y^g\|^2}{2\sigma^2}\right)
\end{align}
if the RBF kernel is Gaussian. In order to estimate the joint density of positive pairs, we use normalizing flows, which is a popular method for density estimation. Popular normalizing flow models include NICE \citep{dinh2014nice}, RealNVP \citep{dinh2016density} and Glow \citep{kingma2018glow}. \Cref{eq:gaussian_assumption_app} is equivalent to the following:
\begin{align}
\log p_g(x^g, y^g) = -\frac{\|x^g - y^g\|^2}{2\sigma^2} + {\rm const},
\end{align}
and thus it suffices to show that the log likelihood is linear w.r.t.~the distances between each positive pair. In Figure \ref{fig:gaussian2}, we plot the relation between $\log p_g$, estimated by RealNVP \footnote{Code available at \url{https://github.com/ikostrikov/pytorch-flows}.}  with a Gaussian prior, and the squared distances $\|x^g - y^g\|^2$. The representations are learned by SimCLR, and $f$-MICL with the KL divergence on the CIFAR-10 dataset. To alleviate the estimation error in the flow model, we divide the distances into small intervals and compute the average log-likelihood within each interval. We can see that the log-likelihood is roughly linear w.r.t.~the squared distance, and thus verifying our Assumption \ref{assmp:joint_vMF}.  


\begin{table*}[t]
\centering
\caption{Test accuracy (\%) on the smaller vision datasets. For the Wikipedia dataset we evaluate the semantic textual similarity (STS) via the Spearman's correlation. For each method, we take three separate runs, and show the mean and stand derivation. }
\label{tab:accs_full}
\setlength\tabcolsep{2pt}
\begin{tabular}{cc@{\hskip2ex}ccc@{\hskip3ex}cccccc}
\toprule
\multirow{2}{*}[-.6ex]{\bf Evaluation} & \multirow{2}{*}[-.6ex]{\bf Dataset} & \multicolumn{3}{c}{\bf Baselines} & \multicolumn{6}{c}{\bf \hspace{1.0em} $f$-MICL}\\
\cmidrule(l{0pt}r{10pt}){3-5}\cmidrule(l{-1pt}r{-1pt}){6-11}
& & MoCo & SimCLR 
& Uniformity 
& \phantom{kk}KL\phantom{kk} & JS & Pearson &SH & Tsallis & VLC \\
\midrule
& \multirow{2}{*}{CIFAR-10} & 90.30 & 89.71 
& 90.41 
& \textbf{90.61} & 89.66
& 89.35 & 89.52 & 89.15 & 89.13\\
& & $\pm 0.19 $  & $\pm0.37$ & $\pm0.26$ 
& $\mathbf{\pm0.47}$ & $\pm0.28$ & $\pm 0.52$ &$\pm 0.25$ & $\pm0.42$& $\pm0.33$ \\
\cmidrule(l{0pt}r{0pt}){3-11}

\multirow{2}{*}{Linear}

& \multirow{2}{*}{STL-10} & 83.69 & 82.97 & 84.44  
&85.33 & {\bf 85.94} & 82.64 & 82.80 & 84.79 & \bf 85.94 \\
& & $\pm 0.22 $  & $\pm 0.32 $ & $\pm 0.19 $ 
& $\pm 0.39 $ & $\mathbf{\pm0.17}$ & $\pm 0.37$ &$\pm 0.27$ & $\pm 0.34 $& $\bf \pm0.72$ \\
\cmidrule(l{0pt}r{0pt}){3-11}

& \multirow{2}{*}{TinyImageNet} & 35.72 & 30.56 &41.20 & 34.95 
& 42.98  &\bf 43.45 & 40.83 & 32.99 & 38.65 \\
& & $\pm 0.17 $  & $\pm 0.28 $ & $\pm 0.19 $ 
& $\pm 0.20 $ & $\pm0.18$ & \bf $\pm 0.54$ &$\pm 0.67$ & $\pm 0.49 $& $\pm 0.45$ \\

\midrule
& \multirow{2}{*}{CIFAR-10} &  88.70 &  84.92 
& 89.42 
& 89.34 & 89.12 & {\bf 89.44} & 88.13 & 89.18 & 89.15\\

& & $\pm 0.22 $  & $\pm0.39$ & $\pm0.18$ 
& $\pm0.57$ & $\pm0.38$ & $\mathbf{\pm 0.60}$ &$\pm 0.18$ & $\pm0.62$& $\pm0.23$ \\
\cmidrule(l{0pt}r{0pt}){3-11}

\multirow{2}{*}{$k$-NN}  
& \multirow{2}{*}{STL-10} & 78.77  & 74.34 & 79.57 
&79.99 & {\bf 80.45} & 76.64 & 78.31 & 76.11 & 79.34 \\
& & $\pm 0.25 $  & $\pm 0.14 $ & $\pm 0.52 $ 
& $\pm0.47$ & $\mathbf{\pm0.19}$ & $\pm 0.26$ & $\pm 0.33$ & $\pm 0.24$ & $\pm 0.62$ \\
\cmidrule(l{0pt}r{0pt}){3-11}
& \multirow{2}{*}{TinyImageNet} &36.22 & 29.60 & 37.44 
& 36.17 & \textbf{38.20} & 38.14 & 35.56 & 33.11 & 35.21\\
& & $\pm 0.20 $  & $\pm 0.39 $ & $\pm 0.27 $ 
& $\pm0.29$ & $\mathbf{\pm0.26}$ & $\pm 0.63$ &$\pm 0.77 $ & $\pm 0.52 $& $\pm 0.33$ \\

\midrule
\multirow{2}{*}{STS}& \multirow{2}{*}{Wikipedia} & 77.88 & 77.40 & 77.95 
& {\bf 78.02} & 76.76 & 77.59 & 73.60 & 72.68 & 55.07\\

& & ${\pm 0.15}$ & ${\pm 0.12}$ & ${\pm 0.08}$ 
& $\mathbf{{\pm 0.13}}$ & ${\pm 0.09}$ & ${\pm 0.12}$ &  ${\pm 0.10}$ & ${\pm 0.09}$ & ${\pm 0.13}$\\

\bottomrule
\end{tabular}
\end{table*}

\end{document}